\documentclass{article}



    \usepackage[final]{neurips_2024}



\usepackage{wrapfig}

\usepackage[utf8]{inputenc} 
\usepackage[T1]{fontenc}    
\usepackage{hyperref}       
\usepackage{url}            
\usepackage{booktabs}       
\usepackage{amsfonts}       
\usepackage{nicefrac}       
\usepackage{microtype}      
\usepackage{xcolor}         

\usepackage{times}
\usepackage{latexsym}
\usepackage{mathtools} 
\usepackage{booktabs} 
\usepackage{tikz} 
\usepackage{caption}
\usepackage{subcaption} 
\usepackage{amsfonts} 
\usepackage{amsthm}
\usetikzlibrary{arrows.meta} 
\usepackage{algorithm}
\usepackage{algpseudocode}
\usepackage{array}
\usepackage{multirow}
\usepackage{scalerel}
\usepackage{arydshln}


\DeclareMathOperator{\circlecircle}{\hbox{$\circ$}\kern-1.2pt\hbox{$--$}\kern-1.5pt\hbox{$\circ$}}


\definecolor{purple}{rgb}{0.13, 0.55, 0.13}
\definecolor{drkgreen}{rgb}{0.56, 0.0, 1.0}

\tikzstyle{bluearrow} = [->, drkgreen, dotted, thick]

\tikzstyle{bluearrowleft} = [<-, drkgreen, dotted, thick]

\newsavebox{\mytempbox}
\newcommand{\refz}[3]{%
  \sbox{\mytempbox}{\hbox{\( \scriptstyle\mkern5mu#2\mkern17mu \)}}
  \( #1
  \tikz[baseline=-0.5ex]{\draw[bluearrow] (0,0) --
    node[midway,above=-0.3ex]{\usebox\mytempbox} (\wd\mytempbox,0);}
  #3 \)}

\newcommand{\lefz}[3]{%
  \sbox{\mytempbox}{\hbox{\( \scriptstyle\mkern5mu#2\mkern17mu \)}}
  \( #1
  \tikz[baseline=-0.5ex]{\draw[bluearrowleft] (0,0) --
    node[midway,above=-0.3ex]{\usebox\mytempbox} (\wd\mytempbox,0);}
  #3 \)}
  
\tikzstyle{purparrow} = [->, purple, dashed, thick]
\tikzstyle{purparrowleft} = [<-, purple, dashed, thick]

\newcommand{\refw}[3]{%
\sbox{\mytempbox}{\hbox{\( \scriptstyle\mkern5mu#2\mkern17mu \)}}
\( #1
\tikz[baseline=-0.5ex]{\draw[purparrow] (0,0) --
node[midway,above=-0.3ex]{\usebox\mytempbox} (\wd\mytempbox,0);}
#3 \)}


\DeclareMathOperator{\doo}{do}

\DeclareMathOperator{\odds}{\text{OR}}


\def\ci{\perp\!\!\!\perp}

\newcommand{\E}{\mathbb{E}}

\newtheorem{proposition}{Proposition}

\usepackage[T1]{fontenc}

\usepackage[utf8]{inputenc}

\usepackage{microtype}

\usepackage{inconsolata}

\usepackage{enumitem}
\usepackage{color}

\title{Proximal Causal Inference with Text Data}

%

\author{%
  Jacob M.~Chen \\
  Department of Computer Science \\
  Johns Hopkins University \\
  \texttt{jchen459@jhu.edu} \\
  \And
  Rohit Bhattacharya \\
  Department of Computer Science \\
  Williams College \\
  \texttt{rb17@williams.edu}
  \And
  Katherine A.~Keith \\
  Department of Computer Science \\
  Williams College \\
  \texttt{kak5@williams.edu}
}

\begin{document}

\maketitle

\begin{abstract}
Recent text-based causal methods attempt to mitigate confounding bias by estimating proxies of confounding variables that are partially or imperfectly measured from unstructured text data. These approaches, however, assume analysts have supervised labels of the confounders given text for a subset of instances, a constraint that is sometimes infeasible due to data privacy or annotation costs. In this work, we address settings in which an important confounding variable is completely unobserved. We propose a new causal inference method that uses two instances of pre-treatment text data, infers two proxies using two zero-shot models on the separate instances, and applies these proxies in the proximal g-formula. We prove, under certain assumptions about the instances of text and accuracy of the zero-shot predictions, that our method of inferring text-based proxies satisfies identification conditions of the proximal g-formula while other seemingly reasonable proposals do not. To address untestable assumptions associated with our method and the proximal g-formula, we further propose an odds ratio falsification heuristic that flags when to proceed with downstream effect estimation using the inferred proxies. We evaluate our method in synthetic and semi-synthetic settings---the latter with real-world clinical notes from MIMIC-III and open large language models for zero-shot prediction---and find that our method produces estimates with low bias. We believe that this text-based design of proxies allows for the use of proximal causal inference in a wider range of scenarios, particularly those for which obtaining suitable proxies from structured data is difficult.

\end{abstract}

\section{Introduction}
\label{sec:introduction}

Data-driven decision making relies on estimating the effect of interventions, i.e.~\emph{causal effect estimation}. For example, a doctor must decide which medicine she will give her patient, ideally the one with the greatest effect on positive  outcomes. 
Many causal effects are estimated via randomized controlled trials---considered the gold standard in causal inference; however, if an experiment is unfeasible or unethical, one must use observational data. In  observational settings, a primary obstacle to unbiased causal effect estimation is confounding variables, variables that affect both the treatment (e.g., which medicine) and the outcome.

Recently, some studies have attempted to mitigate confounding by incorporating (pre-treatment) unstructured text data as proxies for confounding variables or by specifying confounding variables as linguistic properties, e.g., topic \citep{veitch2020adapting,roberts2020adjusting}, tone \citep{sridhar2019estimating}, or use of specific word types \citep{olteanu2017distilling}. A wide range of fields have used text in causal estimates, including medicine \citep{zeng2022uncovering}, the behavioral social sciences \citep{kiciman2018using}, and science-of-science \citep{zhang2023causal}. See \citet{keith2020text,feder2022causal,egami22how} for general overviews of text-based causal estimation. 

If all confounders are directly observed, then causal estimation is relatively\footnote{Setting aside challenges of high-dimensional covariate selection for causal estimation \citep{elijah}.} straightforward with \emph{backdoor adjustment} \citep{pearl2009causality}. 
However, known confounders are often unobserved. In such scenarios, researchers typically use supervised classifiers to predict the confounding variables from text data, 
but these text classifiers rarely achieve perfect accuracy and \emph{measurement error} must be accounted for. 
To address this, another line of work has developed post-hoc corrections of causal estimates in the presence of noisy classifiers \citep{wood2018challenges,fong2021machine,egami2023using, mozer2023leveraging}. 
These approaches, however, require ground-truth labels of the confounding variables for a subset of instances, a constraint that is not always feasible due to privacy restrictions, high annotation costs, or lack of expert labor for labeling.

Our work fills this gap. We address the causal estimation setting for which a practitioner has specified a confounding variable that is truly unmeasured (we have no observations of the variable), but unstructured text data could be used to infer proxies. For this setting, our method combines \emph{proximal causal inference} with zero-shot classifiers. 

Proximal causal inference \citep{miao2018identifying, tchetgen2020introduction, liu2024regression} can identify the true causal effect given \emph{two} proxies for the unmeasured confounder that satisfy certain causal identification conditions. 
A major criticism of this method is that it can be difficult to find two suitable proxies among the structured variables; however, we conjecture that unstructured text data (if available) could be a rich source of potential proxies. 

In our proposed method, summarized in Figure~\ref{fig:pipeline}, we estimate two proxies from text data via zero-shot classifiers, i.e.~classifiers that perform an unseen task with no supervised examples. In subsequent sections, we expand upon this method and its assumptions and empirically validate it on synthetic and semi-synthetic data with real-world clinical notes. 
Since large pre-trained language models (LLMs) have promising performance on zero-shot classification benchmarks \citep[\textit{inter alia}]{yin2019benchmarking,brown2020language,wei2021finetuned,sanh2021multitask}, we use LLMs to infer both of the proxies in our experimental pipeline.
Our combination of proximal causal inference and zero-shot classifiers is not only novel, but also expands the set of text-specific causal designs available to practitioners.\footnote{Supporting code is available at \url{https://github.com/jacobmchen/proximal_w_text}.}

\begin{figure}[t!]
    \centering
    \resizebox{0.98\linewidth}{!}{
        \includegraphics{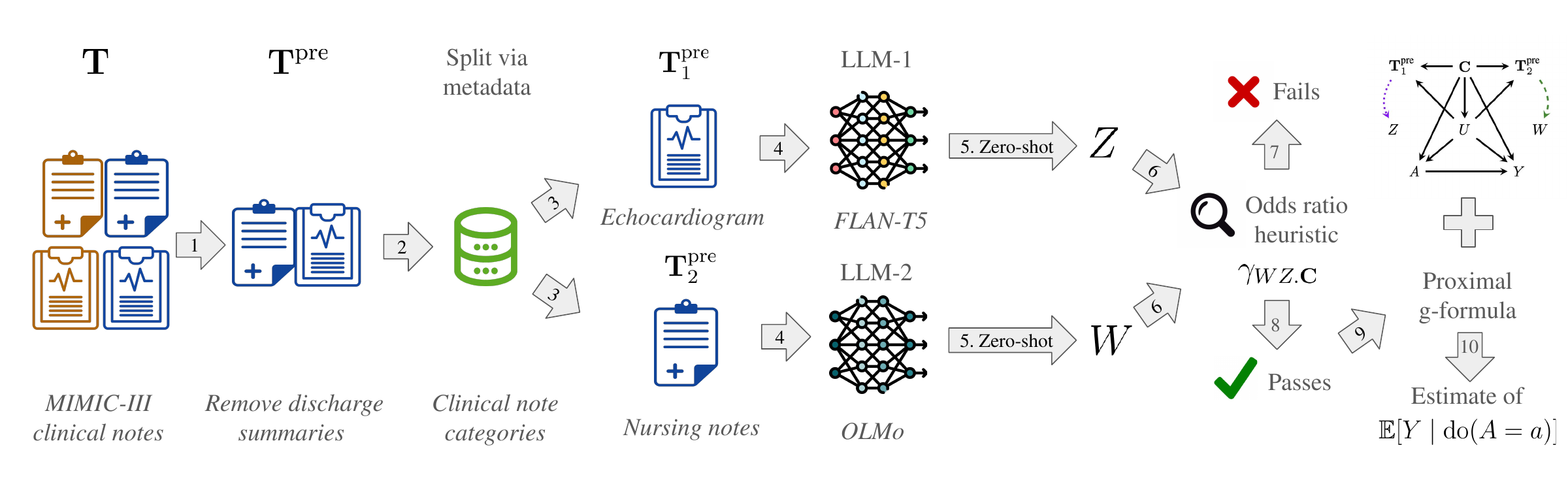}}
    \caption{\textbf{Pipeline for proximal causal inference with text data}. The top row of captions describe the general pipeline that uses text data from any setting, and the bottom italicized row describes an illustrative example based on our semi-synthetic experiments in Sec.~\ref{sec:simulations_and_empirical_procedure}. (1) We filter to only pre-treatment text; (2 and 3) for each individual in the analysis, we select two distinct instances of text (e.g., echocardiogram and nursing notes) via metadata with the goal of satisfying ${\bf T}^\text{pre}_1 \not \ci {\bf T}^\text{pre}_2 \mid U, {\bf C}$; (4 and 5) we use ${\bf T}^\text{pre}_1$ and ${\bf T}^\text{pre}_2$  as inputs into LLM-1 and LLM-2, respectively, to infer zero-shot proxies $Z$ and $W$. (7) If the proxies fail our odds ratio heuristic, analysis stops. (8, 9, and 10) Else, we use the proximal g-formula implied by the casual DAG to estimate the causal effect.}
    \label{fig:pipeline}
\end{figure}

In summary, our \textbf{contributions} are
\begin{itemize}[leftmargin=*,noitemsep]
    \item We propose a new causal inference method that uses distinct instances of pre-treatment text data, infers two proxies from two different zero-shot models on the instances, and applies the proxies in the proximal g-formula \citep{tchetgen2020introduction}.
    \item We provide theoretical proofs that our method satisfies the identification conditions of \emph{proximal causal inference} and prove that other seemingly reasonable alternative methods do not.
    \item We propose a falsification heuristic that uses the odds ratio of the proxies conditional on observed covariates as an approximation of the (untestable) proximal causal inference conditions. 
    \item In synthetic and semi-synthetic experiments using MIMIC-III's real-world clinical notes \citep{johnson2016mimic}, our odds ratio  heuristic correctly flags when identification conditions are violated. When the heuristic passes, causal estimates from our method have low bias and confidence intervals that cover the true parameter; when the heuristic fails, causal estimates are often biased.
\end{itemize}

\begin{wrapfigure}{r}{0.45\textwidth}
    \centering
    \scalebox{0.7}{
	    \begin{tikzpicture}[>=stealth, node distance=1.5cm]
			\tikzstyle{square} = [draw, thick, minimum size=1.0mm, inner sep=3pt]
                \begin{scope}
				\path[->, very thick]
			node[] (a) {$A$}
                node[right of=a, xshift=1cm] (y) {$Y$}
                node[above of=a, xshift=1.2cm, yshift=-0.5cm] (u) {$U$}
                node[above of=u] (c) {${\bf C}$}
                node[below of=u, yshift=-0.5cm] (label) {(a)}
                
                (a) edge[black] (y)
                (u) edge[black] (a)
                (u) edge[black] (y)
                (c) edge[black] (a)
                (c) edge[black] (y)
                (c) edge[black] (u)
			;
			\end{scope}
   
			\begin{scope}[xshift=5cm]
				\path[->, very thick]
			node[] (a) {$A$}
                node[right of=a, xshift=1cm] (y) {$Y$}
                node[above of=a, xshift=1.2cm, yshift=-0.5cm] (u) {$U$}
                node[above of=u] (c) {${\bf C}$}
                node[above of=a, xshift=-0.5cm, yshift=-0.5cm] (z) {$Z$}
                node[above of=y, xshift=0.5cm, yshift=-0.5cm] (w) {$W$}
                node[below of=u, yshift=-0.5cm] (label) {(b)}
                
                (a) edge[black] (y)
                (u) edge[black] (a)
                (u) edge[black] (y)
                (c) edge[black] (a)
                (c) edge[black] (y)
                (c) edge[black] (w)
                (c) edge[black] (z)
                (u) edge[black] (w)
                (u) edge[black] (z)
                (c) edge[black] (u)
			;
			\end{scope}

		\end{tikzpicture}
    }
    \caption{Causal DAGs (a) depicting unmeasured confounding and (b) compatible with the canonical assumptions used for \emph{proximal causal inference} \citep{tchetgen2020introduction}.
    }
    \label{fig:setup}
\end{wrapfigure}
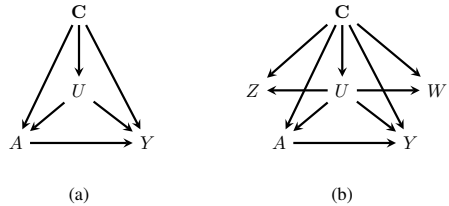

\section{Problem Setup And Motivation}
\label{sec:motivating_example}

To motivate our approach, imagine we are an applied practitioner 
tasked with determining the effectiveness of thrombolytic (clot busting) medications relative to blood thinning medications to treat clots arising from an ischemic stroke. Such medications are usually administered within three hours of the stroke to improve chances of patient recovery \citep{zaheer2011thrombolysis}. Given the urgency  and the short treatment window, running a randomized experiment to compare these drugs is difficult. 
Left with only observational data, we examine electronic health records (EHRs) from a  database like MIMIC-III \citep{johnson2016mimic}.

We formalize our causal estimand as follows: let $A$ denote a binary treatment variable corresponding to clot busting ($A=1$) or blood thinning ($A=0$) medication, and let $Y$ denote measurements of the D-dimer protein in the patient's blood which directly measures how much of the clotting has dissolved. In do-calculus notation \citep{pearl2009causality}, the target causal estimand is the average causal effect, $\text{ACE} \coloneqq \E[Y|\doo(A=1)] - \E[Y|\doo(A=0)]$.

Examining the EHRs, we find potential confounders (in structured tabular form), including biological factors, such as age, sex, and blood pressure, as well as socio-economic factors, such as income. We denote the observed confounders as the set ${\bf C}$. However, we are worried about biased causal effects because atrial fibrillation (irregular heart rhythms) is an important confounder corresponding to a pre-existing heart condition that is not recorded in the structured data. We denote this unmeasured confounder as $U$, and assume for the rest of this paper that ${\bf C}$ and $U$ form a sufficient backdoor adjustment set with respect to $A$ and $Y$ \citep{pearl1995causal}. Figure~\ref{fig:setup}(a) depicts this problem setup in the form of a causal directed acyclic graph (causal DAG) \citep{spirtes2000causation, pearl2009causality}. With the presence of $U$, it is well known that adjusting for just the observed confounders via the backdoor formula $\sum_{\bf c} (\E[Y|A=1, {\bf c}] - \E[Y|A=0,{\bf c}])\times p({\bf c})$ will give a biased estimate of the ACE \citep{pearl1995causal}.
In response to this issue, we consider work that uses proxy variables of the unmeasured confounder. However, we are subject to the following \textbf{restriction}: 
\begin{itemize}
    \item[\textbf{(R1)}] 
    We do not have access to the value of $U$ for any individuals in the dataset.
\end{itemize}
This kind of restriction is common in healthcare or social science settings when data privacy issues, high costs, or lack of expert labor
make analysts unable to hand-label unstructured text data. We elaborate in Appendix~\ref{sec:constraints}. 

In such cases, we turn to using proxies for $U$. \citet{pearl2010measurement} proposed a method for obtaining unbiased effect estimates with a single proxy $W$ under the assumption that $p(W|U)$ is known or estimable, which are impossible for us under (R1). However, a more recent line of work, building from \citet{griliches1977estimating} and \citet{kuroki2014measurement}, called \emph{proximal causal inference} \citep{miao2018identifying, tchetgen2020introduction} is able to identify the true causal effect as long as the analyst proposes two proxies $W$ and $Z$ that satisfy the following independence conditions: 


%
\begin{itemize}
    \item[\textbf{(P1)}] Conditional independence of proxies: $W \ci Z \mid U, {\bf C}$
    \item[\textbf{(P2)}] One of the proxies, say $W$, does not depend on values of the treatment: $W \ci A \mid U, {\bf C}$
    \item[\textbf{(P3)}] The other proxy, $Z$, does not depend on values of the outcome: $Z \ci Y \mid A, U, {\bf C}$
\end{itemize}
A canonical example of proxies that satisfy these conditions is shown in Figure~\ref{fig:setup}(b)\footnote{One could also add the edges $W\rightarrow Y$ and $Z \rightarrow A$ to Figure~\ref{fig:setup}(b), but these relations will not show up in our text-based setting. \citet{shpitser2023proximal} also propose a general proximal identification algorithm that is compatible with other causal DAGs, but we focus on the canonical proximal learning assumptions stated in \citet{tchetgen2020introduction}.}. In addition to these independence relations that impose the absence of certain edges in the causal DAG, e.g., no edge can be present between $Z$ and $W$ to satisfy (P1), there is an additional completeness condition that imposes the existence of $U\rightarrow Z$ and $U\rightarrow W$. This condition is akin to the relevance condition in the instrumental variables literature \cite{angrist1996identification} and ensures that the proxies $W$ and $Z$ exhibit sufficient variability relative to the variability of $U$. 

\begin{itemize}
    \item[\textbf{(P4)}] Completeness: for any square integrable function $v(\cdot)$ and for all values $w, a, {\bf c}$, we have 
    \begin{align*}
    \E[v(U) \mid w, a, {\bf c}] = 0 \iff v(U) = 0, \text{~and~} \E[v(Z) \mid w, a, {\bf c}] = 0 \iff v(Z) = 0.
    \end{align*}
\end{itemize}
Intuitively, these completeness conditions do not hold unless $Z$ and $W$ truly hold some predictive value for the unmeasured confounder $U$. See \citet{miao2018identifying} for more details on completeness. 
Under (P1-P4), each piece of the ACE, $\E[Y|\doo(A=a)]$, is identified via the \emph{proximal g-formula},
\begin{align}
\E[Y \mid \doo(a)] = \sum_{w, {\bf c}} h(a, w, {\bf c}) \times p(w, {\bf c}),
\end{align}
where $h(a, w, {\bf c})$ is the ``outcome confounding bridge function'' that is a solution to the equation $\E[Y | a, z, {\bf c}]=\sum_{w}h(a, w, {\bf c})\times p(w | a, z, {\bf c})$ \citep{miao2018identifying}. Although the existence of a solution is guaranteed under (P1-P4), solving it can still be difficult. However, there exist simple two-stage regression estimators for the proximal g-formula \citep{tchetgen2020introduction, mastouri2021proximal} that we make use of in Section~\ref{sec:simulations_and_empirical_procedure} once we have identified a valid pair of proxies. 
This brings us to the primary criticism of proximal causal inference.

\paragraph{Primary criticism} When using proximal causal inference in practice, how do we find two proxies $W$ and $Z$ among the structured variables such that they happen to satisfy all of (P1-P4)?
We often cannot, at least not without a high degree of domain knowledge.\footnote{
For instance, suppose we aim to find proxies (in the structured variables) of atrial fibrillation ($U$) and have access to shortness of breath and heart palpitations. Although these proxies seem reasonable, we show how they may violate the proximal causal inference conditions. Patient complaints about shortness of breath may affect which medication a clinician prescribes ($A$); hence, using shortness of breath as the proxy $W$ violates (P2). Furthermore, a lack of oxygen resulting from shortness of breath may affect the
healing of blood clots, thus influencing measurements of the D-dimer protein $Y$. In this case, using shortness of breath as the proxy $Z$ violates (P3). Shortness of breath can also sometimes be a symptom of heart palpitations, violating (P1).} Furthermore, empirically testing for any of (P1-P3) in general is, by definition of the problem, not possible because doing so requires complete access to the unobserved confounder $U$.
%
\paragraph{Our approach}  
Instead, in this work, we propose relying on raw unstructured text data (e.g., clinical notes) in an attempt to infer proxies that satisfy (P1-P4) \textbf{\emph{by design}}.

Returning to our motivating problem, our approach applies classifiers zero-shot---since we have no training data with $U$ given (R1)---to two distinct instances of pre-treatment clinical notes of each patient and obtains two  predictions, $W$ and $Z$, for atrial fibrillation (our $U$).
In order to make it more feasible to estimate the ACE from data, we make the following two relatively weak assumptions: 
\begin{itemize}
    \item[\textbf{(S1)}] The unmeasured confounder $U$ between $A$ and $Y$ can be specified as a binary variable.
    \item[\textbf{(S2)}] The text only causes $W$ and $Z$ (and no other variables).
\end{itemize}

Assuming (S1) simplifies estimation since text classification typically performs better empirically than text regression \citep{wang2022uncertainty}. Assumption (S2) asserts that the text data considered serves as a record of events rather than actionable data.\footnote{We present an example of how to relax this assumption in Appendix~\ref{app:post_treatment}. We leave to future work more complicated scenarios, e.g., if the reader's perception of text differs from the writer's intent \cite{pryzant2021causal}.} 

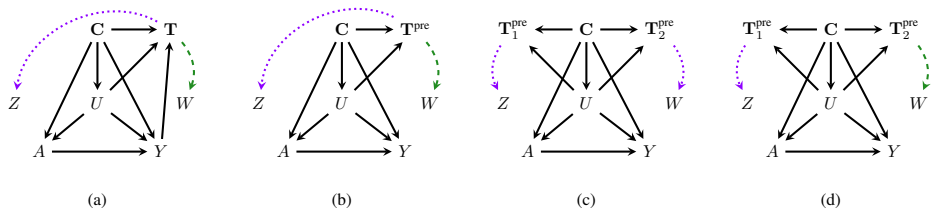
\begin{figure*}[t]
    \centering
    \scalebox{0.65}{
	    \begin{tikzpicture}[>=stealth, node distance=1.5cm]
			\tikzstyle{square} = [draw, thick, minimum size=1.0mm, inner sep=3pt]
            \begin{scope}[xshift=0cm]
				\path[->, very thick]
			    node[] (a) {$A$}
                node[right of=a, xshift=1cm] (y) {$Y$}
                node[above of=a, xshift=1.2cm, yshift=-0.5cm] (u) {$U$}
                node[above of=u] (c) {${\bf C}$}
                node[right of=c] (t) {${\bf T}$}
                node[above of=a, xshift=-0.5cm, yshift=-0.5cm] (z) {$Z$}
                node[above of=y, xshift=0.5cm, yshift=-0.5cm] (w) {$W$}
                node[below of=u, yshift=-0.5cm] (label) {(a)}
                
                (a) edge[black] (y)
                (u) edge[black] (a)
                (u) edge[black] (y)
                (c) edge[black] (a)
                (c) edge[black] (y)
                (c) edge[black] (u)
                (c) edge[black] (t)
                (u) edge[black] (t)
                (t) edge[drkgreen, dotted, bend right=55] (z)
                (t) edge[purple, dashed, bend left] (w)
                (y) edge[black] (t)
			;
			\end{scope}

            \begin{scope}[xshift=5cm]
				\path[->, very thick]
			    node[] (a) {$A$}
                node[right of=a, xshift=1cm] (y) {$Y$}
                node[above of=a, xshift=1.2cm, yshift=-0.5cm] (u) {$U$}
                node[above of=u] (c) {${\bf C}$}
                node[right of=c] (t) {${\bf T}^{\text{pre}}$}
                node[above of=a, xshift=-0.5cm, yshift=-0.5cm] (z) {$Z$}
                node[above of=y, xshift=0.5cm, yshift=-0.5cm] (w) {$W$}
                node[below of=u, yshift=-0.5cm] (label) {(b)}
                
                (a) edge[black] (y)
                (u) edge[black] (a)
                (u) edge[black] (y)
                (c) edge[black] (a)
                (c) edge[black] (y)
                (c) edge[black] (u)
                (c) edge[black] (t)
                (u) edge[black] (t)
                (t) edge[drkgreen, dotted, bend right=55] (z)
                (t) edge[purple, dashed, bend left] (w)
			;
			\end{scope}

            \begin{scope}[xshift=10cm]
				\path[->, very thick]
			         node[] (a) {$A$}
                node[right of=a, xshift=1cm] (y) {$Y$}
                node[above of=a, xshift=1.2cm, yshift=-0.5cm] (u) {$U$}
                node[above of=u] (c) {${\bf C}$}
                node[left of=c] (t1) {${\bf T}^{\text{pre}}_1$}
                node[right of=c] (t2) {${\bf T}^{\text{pre}}_2$}
                node[above of=a, xshift=-0.5cm, yshift=-0.5cm] (z) {$Z$}
                node[above of=y, xshift=0.5cm, yshift=-0.5cm] (w) {$W$}
                node[below of=u, yshift=-0.5cm] (label) {(c)}
                
                (a) edge[black] (y)
                (u) edge[black] (a)
                (u) edge[black] (y)
                (c) edge[black] (a)
                (c) edge[black] (y)
                (c) edge[black] (u)
                (c) edge[black] (t1)
                (u) edge[black] (t1)
                (c) edge[black] (t2)
                (u) edge[black] (t2)
                (t1) edge[drkgreen, dotted, bend right] (z)
                (t2) edge[drkgreen, dotted, bend left] (w)
			;
			\end{scope}

            \begin{scope}[xshift=15cm]
				\path[->, very thick]
			    node[] (a) {$A$}
                node[right of=a, xshift=1cm] (y) {$Y$}
                node[above of=a, xshift=1.2cm, yshift=-0.5cm] (u) {$U$}
                node[above of=u] (c) {${\bf C}$}
                node[left of=c] (t1) {${\bf T}^{\text{pre}}_1$}
                node[right of=c] (t2) {${\bf T}^{\text{pre}}_2$}
                node[above of=a, xshift=-0.5cm, yshift=-0.5cm] (z) {$Z$}
                node[above of=y, xshift=0.5cm, yshift=-0.5cm] (w) {$W$}
                node[below of=u, yshift=-0.5cm] (label) {(d)}
                
                (a) edge[black] (y)
                (u) edge[black] (a)
                (u) edge[black] (y)
                (c) edge[black] (a)
                (c) edge[black] (y)
                (c) edge[black] (u)
                (c) edge[black] (t1)
                (u) edge[black] (t1)
                (c) edge[black] (t2)
                (u) edge[black] (t2)
                (t1) edge[drkgreen, dotted, bend right] (z)
                (t2) edge[purple, dashed, bend left] (w)
                ;
			\end{scope}
		\end{tikzpicture}
    }
    \caption{\label{fig:gotchas}
    Causal DAGs depicting several different scenarios for inferring text-based proxies.
    Edges with different colors and patterns from the text ${\bf T}$ to the proxies $Z$ and $W$
    indicate that different zero-shot models were used.  Our final recommended method is based on (d).}
\end{figure*}

\section{Designing Text-Based Proxies}
\label{sec:gotchas}

In this section, we describe our method for designing text-based proxies. In doing so, we describe various ``gotchas,'' pitfalls in attempting to use these text-based proxies in causal effect estimation. We describe each gotcha and explore how they lead us to our final recommended design, given by Figure~\ref{fig:gotchas}(d). Our empirical results in Section~\ref{sec:simulations_and_empirical_procedure} demonstrate, as expected, that these pitfall approaches result in biased causal effect estimates. 



\paragraph{Gotcha \#1: Using predictions directly in backdoor adjustment.}
Suppose we try to avoid the complications of proximal causal inference by using the predictions from one of our zero-shot models, say $W$, as the confounding variable itself. 

\begin{proposition}
Using a proxy $W$ in the backdoor adjustment formula results in biased estimates of the ACE \emph{in general}. 
\end{proposition}



\begin{proof}
If $W\not=U$ for some subset of instances, there remains an open backdoor path through $U$, and the ACE remains biased as $\sum_{U, {\bf C}} (\E[Y \mid A=1, U, {\bf C}] - \E[Y \mid A=0, U, {\bf C}]) \times p(U, {\bf C}) \not=  \sum_{W, {\bf C}} (\E[Y \mid A=1, W, {\bf C}] - \E[Y \mid A=0, W, {\bf C}]) \times p(W, {\bf C})$ in general \citep{pearl1995causal}.
\end{proof}

The most straightforward way to obtain unbiased results under Gotcha~\#1 is to have 100\% accuracy between $W$ and $U$, a scenario that is extremely unlikely in the real world. Further, under (R1)\footnote{In the absence of (R1), we direct readers to work that adjusts via  measurement error estimates or assumptions that $U$ is ``missing at random'' \cite{wood2018challenges}.}, we cannot measure accuracy at inference time since we do not have any observations of $U$.
As expected, in our semi-synthetic experiments in Section~\ref{sec:simulations_and_empirical_procedure}, we find using predictions of $W$ directly in a backdoor adjustment formula results in biased estimates; see Figure~\ref{fig:sim_results}.

\paragraph{Gotcha \#2: Using post-treatment text.}

While it is well-known that adjusting for post-treatment covariates in the backdoor formula often leads to bias \citep{pearl2009causality}, it is not obvious what might go wrong when using post-treatment text to infer proxies for the proximal g-formula. 

\begin{proposition}
    If both $W$ and $Z$ are inferred from zero-shot models on text that contain post-treatment information, then the resulting proxies violate either (P2), (P3), or both.
\end{proposition}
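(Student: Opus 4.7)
The plan is to argue the proposition via d-separation in the DAG of Figure~\ref{fig:gotchas}(a), showing that at least one of (P2) or (P3) must fail whenever the text $\mathbf{T}$ that feeds the zero-shot models contains post-treatment information. First I would pin down what ``post-treatment'' means graphically: $\mathbf{T}$ has $A$ or $Y$ (or both) as a causal ancestor, so there is some directed path from $\{A, Y\}$ into $\mathbf{T}$; the canonical case drawn in Figure~\ref{fig:gotchas}(a) is $Y \to \mathbf{T}$, which together with $A \to Y$ implies a directed path $A \to Y \to \mathbf{T}$ as well. By (S2), $W$ and $Z$ are (stochastic) children of $\mathbf{T}$, so any directed path into $\mathbf{T}$ extends to directed paths into $W$ and $Z$.

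Next I would split into cases based on which post-treatment edges are present. In Case 1, $\mathbf{T}$ depends on $Y$ (and hence transitively on $A$). The chain $A \to Y \to \mathbf{T} \to W$ has no colliders, and none of its interior nodes $Y, \mathbf{T}$ lie in the conditioning set $\{U, \mathbf{C}\}$, so it is d-connecting; this yields $W \nci A \mid U, \mathbf{C}$ and violates (P2). Likewise, $Y \to \mathbf{T} \to Z$ is d-connecting given $\{A, U, \mathbf{C}\}$ because $\mathbf{T}$ is an unconditioned chain node, so $Z \nci Y \mid A, U, \mathbf{C}$ and (P3) fails as well. In Case 2, $\mathbf{T}$ depends on $A$ but not on $Y$; then the chain $A \to \mathbf{T} \to W$ is open given $\{U, \mathbf{C}\}$, so (P2) fails, which is all the proposition requires under its ``either \ldots{} or both'' phrasing.

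The genuinely substantive step is Case 1, which establishes the ``both'' clause; the rest is bookkeeping over the graphical cases. I expect the main obstacle to be not the d-separation arguments themselves, which are essentially mechanical once the DAG is fixed, but rather stating the graphical hypothesis precisely enough that ``post-treatment'' unambiguously forces the edges I use. I would therefore open the proof with a one-sentence definition that ties ``post-treatment text'' to the existence of a directed edge from $A$, or from a descendant of $A$ such as $Y$, into $\mathbf{T}$, and then cite (S2) so that $W$ and $Z$ inherit these ancestors. After that setup, each of the two d-separation checks reduces to pointing at a single three- or four-node directed chain with no conditioned interior node, which immediately delivers the violation of (P2), (P3), or both.
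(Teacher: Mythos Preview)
Your proposal is correct and follows essentially the same d-separation approach as the paper: the paper's proof simply points to the open paths $Y \to {\bf T} \to Z$ and $A \to Y \to {\bf T} \to W$ in Figure~\ref{fig:gotchas}(a) to violate (P3) and (P2), respectively. Your Case~1 reproduces this argument almost verbatim; your Case~2 (where $A \to {\bf T}$ but $Y \not\to {\bf T}$) is an extra bit of care the paper omits, since its proof treats only the post-outcome configuration actually drawn in the figure.
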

\begin{proof}
    Consider Figure~\ref{fig:gotchas}(a), where the proxies are produced using text that is post-outcome and thus also post-treatment. We show that this violates both (P2) and (P3). Clearly the DAG violates (P3): by a simple d-separation argument we see that $Z \not\ci Y \mid A, U, {\bf C}$ due to the open path $Y \rightarrow$\refz{{\bf T}}{}{Z}. Similarly, (P2) is violated from the open path $A \rightarrow Y \rightarrow$\refw{{\bf T}}{}{W}.
\end{proof}

Thus, before performing zero-shot inference, it is important that the text for each individual is filtered in such a way that it contains only the text preceding treatment\footnote{In Appendix~\ref{app:post_treatment} we show how proxies could be inferred using a mix of pre- and post-treatment text while satisfying (P1-P3). However, we advocate for the the pre-treatment rule due to its simplicity.}. In our running clinical example, we can avoid this gotcha by using the time stamps of the clinical notes and information about when the patient was treated and discharged.

\paragraph{Gotcha \#3: Predicting both proxies from the same instance of text.}

After filtering to only pre-treatment text, ${\bf T}^{\text{pre}}$, for each individual, the intuitive next step is to use  ${\bf T}^{\text{pre}}$ to infer $W$ and $Z$.

\begin{proposition}
    If $W$ and $Z$ are inferred via zero-shot models on the same instance of pre-treatment text, the resulting proxies violate (P1). 
\end{proposition}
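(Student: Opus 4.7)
The plan is to prove this by a direct d-separation argument on the causal DAG in Figure~\ref{fig:gotchas}(b), mirroring the style of the proof of Proposition~2. The setup there is that a single pre-treatment text variable ${\bf T}^{\text{pre}}$ has incoming edges from $U$ and ${\bf C}$ (reflecting assumption (S2)'s mirror image that text records confounder-related events) and outgoing edges to both $W$ and $Z$ (reflecting that both zero-shot classifiers read ${\bf T}^{\text{pre}}$). I would show that in this DAG $W \not\ci Z \mid U, {\bf C}$, which is exactly the negation of (P1).

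The key step is to exhibit an open path between $W$ and $Z$ under the conditioning set $\{U, {\bf C}\}$. The obvious candidate is the fork $W \leftarrow {\bf T}^{\text{pre}} \rightarrow Z$. Since ${\bf T}^{\text{pre}}$ is a non-collider on this path and is not in the conditioning set, the path is unblocked by $\{U, {\bf C}\}$. Hence $W$ and $Z$ are d-connected given $\{U, {\bf C}\}$, so (P1) fails.

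I would then briefly comment on why this is not a pathology of the particular DAG drawn but a generic feature of the ``same-passage'' design: any shared text input acts as a latent common cause of the two predictions, and unless that text is rendered independent of itself by conditioning (impossible), (P1) cannot hold. This motivates the split-text design in Figure~\ref{fig:gotchas}(c)--(d), where the two proxies are generated from disjoint passages ${\bf T}^{\text{pre}}_1$ and ${\bf T}^{\text{pre}}_2$, removing the common cause.

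The main obstacle is not mathematical depth but precision about the causal model: I need to justify that ${\bf T}^{\text{pre}}$ should be represented as a single node with arrows into both $W$ and $Z$ (so that the fork exists), rather than conditioned upon or marginalized out; the DAG in Figure~\ref{fig:gotchas}(b) and assumption (S2) together make this explicit, so the argument reduces to reading off a single d-separation relation.
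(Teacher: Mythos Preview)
Your proposal is correct and follows exactly the paper's approach: exhibit the fork path $W \leftarrow {\bf T}^{\text{pre}} \rightarrow Z$ in Figure~\ref{fig:gotchas}(b) and observe via d-separation that it is unblocked by $\{U, {\bf C}\}$ since ${\bf T}^{\text{pre}}$ is a non-collider outside the conditioning set. The additional commentary you propose (on genericity and the motivation for splitting) is a reasonable elaboration but not part of the paper's own proof.
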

\begin{proof}
    Consider the causal DAG with proxies $W$ and $Z$ in Fig~\ref{fig:gotchas}(b). By d-separation we have $W \not\ci Z \mid U, {\bf C}$ due to the path \lefz{Z}{}{{\bf T}^{\text{pre}}}\!\!\refw{}{}{W}.
\end{proof}


 In our synthetic experiments in Section~\ref{sec:simulations_and_empirical_procedure}, we find inferring proxies from the same instance of text results in biased estimates; see Table~\ref{tab:fully_synthetic_simulation}.
To avoid Gotcha \#3, we select two distinct instances of pre-treatment text data for each unit of analysis, ${\bf T}^\text{pre}_1$ and ${\bf T}^\text{pre}_2$, such that ${\bf T}^\text{pre}_1 \ci {\bf T}^\text{pre}_2 \mid U, {\bf C}$\footnote{
Note this independence condition does not imply the two pieces of text are completely uncorrelated. Since the text is written based on observations of the same individual, we certainly expect ${\bf T}^\text{pre}_1 \not\ci {\bf T}^\text{pre}_2$; we simply require that the two pieces are correlated only due to ${\bf C}$ and $U$.}. For example, in the clinical setting we use metadata to select two separate categories of notes for each individual patient, e.g.,  nursing notes and echocardiogram notes.
We hypothesize this satisfies ${\bf T}^\text{pre}_1 \ci {\bf T}^\text{pre}_2 \mid U, {\bf C}$ since different providers will likely write reports that differ in content and style. Although the availability of distinct instances is domain-dependent, we hypothesize this type of data exists in other domains as well; for example, one could select distinct social media posts written by the same individual or multiple speeches given by the same politician.


\paragraph{Gotcha \#4: Using a single zero-shot model.}

After splitting text into ${\bf T}^\text{pre}_1$ and ${\bf T}^\text{pre}_2$, should we apply the same zero-shot model to these two instances of text to infer the proxies $W$ and $Z$, as in Figure~\ref{fig:gotchas}(c), or should we apply two separate zero-shot models as in Figure~\ref{fig:gotchas}(d)? We find different answers in theory and practice. Proposition~\ref{prop:splitting} shows that, in theory, both are valid as long as ${\bf T}^\text{pre}_1 \ci {\bf T}^\text{pre}_2 \mid U, {\bf C}$. However, we later describe how our semi-synthetic experiments demonstrate the need for two different models in practice in Section~\ref{sec:simulations_and_empirical_procedure}. We first establish the theoretical validity of both strategies.
\begin{proposition}
    \label{prop:splitting}
    If $W$ and $Z$ are inferred using zero-shot classification on two unique instances of pre-treatment text such that ${\bf T}^\text{pre}_1 \ci {\bf T}^\text{pre}_2 \mid U, {\bf C}$, then these proxies satisfy (P1-P3). Additionally, if the proxies are predictive of $U$, i.e., $Z \not\ci U \mid {\bf C}$ and $W \not\ci U \mid {\bf C}$, then (P4) holds. 
\end{proposition}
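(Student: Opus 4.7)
The plan is to verify the four identification conditions one at a time, working with the causal DAG of Figure~\ref{fig:gotchas}(d) together with the postulated split-independence ${\bf T}^\text{pre}_1 \ci {\bf T}^\text{pre}_2 \mid U, {\bf C}$ and the structural fact that, by construction, $Z$ is obtained as a function of ${\bf T}^\text{pre}_1$ alone and $W$ as a function of ${\bf T}^\text{pre}_2$ alone. Conditions (P1)--(P3) will fall out of standard d-separation; condition (P4), the harder one, will rely crucially on (S1) (binary $U$) together with the hypothesized dependence of each proxy on $U$.

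For (P1), since $Z$ depends only on ${\bf T}^\text{pre}_1$ and $W$ only on ${\bf T}^\text{pre}_2$, and these splits are conditionally independent given $(U, {\bf C})$, functions of conditionally independent variables remain conditionally independent, yielding $W \ci Z \mid U, {\bf C}$. For (P2), the only parent of $W$ is ${\bf T}^\text{pre}_2$, whose parents are $\{U, {\bf C}\}$ (the text is pre-treatment, so there is no edge from $A$), and hence every path from $A$ to $W$ is blocked by conditioning on $\{U, {\bf C}\}$. An analogous d-separation argument on ${\bf T}^\text{pre}_1$ and $Z$, now conditioning on $\{A, U, {\bf C}\}$, blocks the paths from $Y$ to $Z$ and establishes (P3).

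The harder part is (P4). The key observation is that with $U$ binary, any square-integrable $v(U)$ is parameterized by the two scalars $v(0), v(1)$, and the condition $\E[v(U) \mid w, a, {\bf c}] = 0$ for all $(w, a, {\bf c})$ becomes a homogeneous linear system in those two unknowns. Its only solution is the trivial one precisely when $P(U=1 \mid w, a, {\bf c})$ is non-constant in $w$ for some $(a, {\bf c})$. Using (P2), Bayes' rule gives $P(U \mid w, a, {\bf c}) \propto P(W=w \mid U, {\bf c})\, P(U \mid a, {\bf c})$, so the required non-constancy in $w$ follows from the hypothesis $W \nci U \mid {\bf C}$. For the completeness condition on $v(Z)$, I would first note that in Figure~\ref{fig:gotchas}(d) the path $Z \leftarrow {\bf T}^\text{pre}_1 \leftarrow U \rightarrow {\bf T}^\text{pre}_2 \rightarrow W$ is d-connecting given $\{A, {\bf C}\}$, so combining $Z \nci U \mid {\bf C}$ and $W \nci U \mid {\bf C}$ yields $Z \nci W \mid A, {\bf C}$, which supplies the variation in the conditional distribution of $Z$ across $w$ needed for the analogous rank argument.

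The main obstacle is (P4): in general, completeness is a strong analytic property that is not implied by mere non-independence. What rescues us here is (S1) together with the fact that zero-shot classifier outputs are discrete, which lets me replace an infinite-dimensional functional condition with a finite rank condition on a conditional-probability matrix. Some care is needed when $Z$ is multi-valued rather than binary: the same linear-algebra reduction goes through with a larger matrix, but one must verify that the hypothesized non-independence still yields enough column rank in the relevant matrix $[P(Z \mid w, a, {\bf c})]_{w, z}$ for every fixed $(a, {\bf c})$.
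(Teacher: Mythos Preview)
Your treatment of (P1)--(P3) via d-separation is essentially the paper's argument, just spelled out in more detail; the paper simply says ``Applying d-separation confirms that the conditions (P1--P3) hold.''

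For (P4), the paper takes a different and shorter route: rather than proving completeness from scratch, it invokes a known sufficient condition from \citet{kuroki2014measurement} and \citet{tchetgen2020introduction}, namely that when $W$ and $Z$ are predictive of $U$, completeness holds provided $\min(|\mathfrak{X}_Z|, |\mathfrak{X}_W|) \geq |\mathfrak{X}_U|$. Since (S1) makes $U$ binary and classifier outputs are discrete, this cardinality inequality is immediate. Your direct linear-algebra reduction for the first completeness condition (on $v(U)$) is correct and illuminating, and indeed it is exactly what underlies the cardinality criterion in the binary case.

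There is, however, a genuine gap in your argument for the second completeness condition (on $v(Z)$). You assert that the d-connecting path $Z \leftarrow {\bf T}^\text{pre}_1 \leftarrow U \rightarrow {\bf T}^\text{pre}_2 \rightarrow W$, together with $Z \nci U \mid {\bf C}$ and $W \nci U \mid {\bf C}$, yields $Z \nci W \mid A, {\bf C}$. But d-connection does not imply dependence without a faithfulness assumption, and combining two marginal dependences on $U$ does not in general force the composed dependence: one can write $P(Z \mid w, a, {\bf c}) = \sum_u P(Z \mid u, {\bf c})\, P(u \mid w, a, {\bf c})$, but the values of ${\bf c}$ at which $P(Z \mid u, {\bf c})$ varies in $u$ need not coincide with those at which $P(u \mid w, a, {\bf c})$ varies in $w$. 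You would need an additional assumption (e.g., that the dependences hold for all ${\bf c}$, or faithfulness) to close this. The paper sidesteps the issue by deferring to the cited sufficient condition, which packages the relevant rank requirements into a single citable statement; if you want to keep your direct proof, you should either add a faithfulness-type assumption explicitly or fall back on the same cardinality criterion for the $v(Z)$ part.
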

\begin{proof}
    Suppose we apply zero-shot classification models to two splits of pre-treatment text in a way that results in causal DAGs shown in Figure~\ref{fig:gotchas}(c) or (d) depending on whether we use one or two models, respectively. Applying d-separation confirms that the conditions (P1-P3) hold in both cases. 

    Let $|\mathfrak{X}_V|$ denote the number of categories 
    of a variable $V$. \citet{kuroki2014measurement, tchetgen2020introduction} state that when $W$ and $Z$ are predictive of $U$ (as stated in the proposition), a sufficient condition for (P4) is $\min(|\mathfrak{X}_Z|, |\mathfrak{X}_W|) \geq |\mathfrak{X}_U|$. Since $U$ is binary under (S1) and since $W$ and $Z$ are discrete variables because they are inferred from classifiers, this condition is satisfied. Hence, (P4) is satisfied.
\end{proof}

\paragraph{Our Final Design Procedure}
Figure~\ref{fig:pipeline} and the causal DAG in Figure~\ref{fig:gotchas}(d) summarize our final design procedure---obtain two distinct instances of pre-treatment text for each individual and apply two distinct zero-shot models to obtain $W$ and $Z$.

In Proposition~\ref{prop:splitting}, we formalized how this procedure can be used to design proxies that satisfy the proximal conditions (P1-P4). However, this result relied on two important pre-conditions: (1) the conditional independence of the two instances of text, and (2) $W$ and $Z$ being (at least weakly) predictive of $U$. Yet, both of these conditions are untestable in general; this motivates the next section.





\begin{algorithm}[t]
\caption{for inferring two text-based proxies}\label{alg:design}
\begin{algorithmic}[1]
\State \textbf{Inputs}: Observed confounders ${\bf C}$; Text ${\bf T}$; Zero-shot  models ${\cal M}_1$, ${\cal M}_2$;  Specified $\gamma_{\text{high}}$
\State Select two distinct instances of pre-treatment text ${\bf T}^\text{pre}_1$ and ${\bf T}^\text{pre}_2$ from ${\bf T}$
\State Infer $Z \gets {\cal M}_1({\bf T}^\text{pre}_1)$ and  $W \gets {\cal M}_2({\bf T}^\text{pre}_2)$
\State Compute the confidence interval (CI) for $\gamma_{WZ.{\bf C}}$, $(\gamma_{WZ.{\bf C}}^{\text{CI low}}, \gamma_{WZ.{\bf C}}^{\text{CI high}})$
\If {$1 < \gamma_{WZ.{\bf C}}^{\text{CI low}} \text{ and } \gamma_{WZ.{\bf C}}^{\text{CI high}} < \gamma_{\text{high}}$} {\color{blue} return $W$ and $Z$}
\Else {~\color{red} return ``stop''} 
\EndIf 
\end{algorithmic}
\end{algorithm}

\section{Falsification: Odds Ratio Heuristic}
\label{sec:falsification}


In practice, a major challenge for our procedure---and indeed all causal methods---are its assumptions. Sometimes, causal models imply testable restrictions on the observed data that can be used in \emph{falsification} or \emph{confirmation tests} of model assumptions, see \citet{wang2017falsification, chen2023causal, bhattacharya2022testability} for tests of some popular models. In our case, the proximal model implies no testable restrictions \citep{tchetgen2020introduction}, so the best we can do is provide a \emph{falsification heuristic} that allows analysts to detect serious violations of (P1-P4) when using the inferred proxies and stop analysis. Our heuristic is based on the odds ratio function described below. 

Given arbitrary reference values  $w_0$ and $z_0$, the conditional odds ratio function for $W$ and $Z$ given covariates ${\bf X}$ is defined 
as \cite{chen2007semiparametric},
%
$
    \odds(w, z \mid {\bf x}) = \frac{p(w \mid z, {\bf x})}{p(w_0 \mid z, {\bf x})} \times \frac{p(w_0 \mid z_0, {\bf x})}{p(w \mid z_0, {\bf x})} \nonumber.
$
%
This function is important because $W \ci Z \mid {\bf X}$ if and only if $\odds(w, z \mid {\bf x})=1$ for all values $w, z, {\bf x}$. 
We summarize this odds ratio as a single free parameter, $\gamma_{WZ.{\bf X}}$, and, for the simplicity of our pipeline, we estimate it under a parametric model for $p(W|Z, {\bf X})$\footnote{More generally, the odds ratio can be treated as a finite $p$-dimensional parameter vector ${\boldsymbol{\gamma}}$ and estimated under semi-parametric  restrictions on  $p(W | Z, {\bf X})$ and $p(Z | W, {\bf X})$ \citep{chen2007semiparametric, tchetgen2010doubly}.}.


%
Now, we describe our proximal conditions in terms of odds ratio parameters. If (P1-P3) are satisfied, then $W \ci Z \mid U, {\bf C}$ and $\gamma_{WZ.U{{\bf C}}}=1$. Further, if the zero-shot models are truly predictive of $U$, then (P4) is satisfied and $W \not\ci Z \mid {\bf C}$, which means that $\gamma_{WZ.{\bf C}}\not=1$. Ideally, we would want to estimate both of these odds ratio parameters to confirm (P1-P4) empirically; however, $\gamma_{WZ.U{{\bf C}}}$ cannot be computed from observed data alone due to (R1). 

Using a parameter we can estimate from observed data, $\gamma_{WZ.{\bf C}}$, we propose an \textbf{odds ratio falsification heuristic} in lines 3-6 of Algorithm~\ref{alg:design}. Now, we explain why, if this heuristic holds, an analyst can be reasonably confident in using their inferred text-based proxies for estimation.

First, we examine a lower bound on $\gamma_{WZ.{\bf C}}$. Based on our previous discussion, if $\gamma_{WZ.{\bf C}}$ is close to $1$, then we should suspect that one or both of our zero-shot models failed to return informative predictions for $U$. Next, let us treat $\gamma_{WZ.{\bf C}}$ as an imperfect approximation of $\gamma_{WZ.U{\bf C}}$.
Let $W, Z, U$ be binary with reference values $w_0=z_0=u_0=0$. \citet{vanderweele2008sign} proposed the following three conditions under which an odds ratio $\gamma_{WZ.{\bf C}}$ that fails to adjust for an unmeasured confounder $U$ is an \emph{overestimate} of the true odds ratio $\gamma_{WZ.U{\bf C}}$: (i) $\{U\} \cup {\bf C}$ satisfies the backdoor criterion with respect to $W$ and $Z$; (ii) $U$ is univariate or consists of independent components conditional on ${\bf C}$; (iii) $\E[W | u, z, {\bf c}]$ is non-decreasing in $U$ for all $z$ and ${\bf c}$ and $\E[Z | u, {\bf c}]$ is non-decreasing in $U$ for all ${\bf c}$.

Condition (i) is satisfied from Graph~\ref{fig:gotchas}(d), and condition (ii) is satisfied by assumption (S1). Finally, condition (iii) is satisfied when our zero-shot models are reasonable predictors of the unmeasured confounder $U$ by the following argument. 
Notice that $\E[W|u, z, {\bf c}] = \E[W|u, {\bf c}] = p(W=1 | u, {\bf c})$, where the first equality follows from d-separation in Graph~\ref{fig:gotchas}(d) and the second equality follows from the definition of expectation for binary variable $W$. Then we should expect, if the zero-shot models are reasonably accurate, that $p(W=1|U=1, {\bf c}) > p(W=1|U=0, {\bf c})$. Therefore, the first part of condition (iii) is satisfied. Similar logic holds for $\E[Z|u, {\bf c}]$.

Hence, we have shown that under ideal conditions that satisfy (P1-P4), we should expect $\gamma_{WZ.{\bf C}} > \gamma_{WZ.U{\bf C}}=1$, and we should reject proxies $W$ and $Z$ when we observe an odds ratio $\gamma_{WZ.{\bf C}} \leq 1$. Next, we examine the upper bound on $\gamma_{WZ.{\bf C}}$. Consider the extreme case where $\gamma_{WZ.{\bf C}} = \infty$. This corresponds to a situation where $W=Z$, so (P1) is clearly not satisfied. In general, if $\gamma_{WZ.{\bf C}}$ is higher than some threshold $\gamma_\text{high}$, corresponding to the maximum association that one could reasonably explain by a single open path through $U$, we should suspect that perhaps the proxies $W$ and $Z$ are associated with each other due to additional paths through other unmeasured variables that make it so that the two instances of text are not independent of each other, i.e., ${\bf T}^{\text{pre}}_1 \not\ci {\bf T}^{\text{pre}}_2 \mid U, {\bf C}$.

Following standard practice in \emph{sensitivity analysis}, e.g., \citet{liu2013introduction, leppala2023sensitivity}, we leave it to the analyst to specify the upper bound $\gamma_{\text{high}}$ based on domain knowledge. In our experiments in Section~\ref{sec:simulations_and_empirical_procedure}, we found that, when the proximal conditions are not satisfied, $\gamma_{WZ.{\bf C}}$ far exceeds any reasonable setting of $\gamma_{\text{high}}$. Hence, our heuristic works quite well in practice even with a generous suggestion for an upper bound. We describe our full design procedure with the diagnostic in Algorithm~\ref{alg:design} and  calculate 95\% confidence intervals for $\gamma_{WZ.{\bf C}}$ via the bootstrap percentile method \citep{wasserman2004all}. We now evaluate its effectiveness for downstream  causal inference.


\section{Empirical Experiments and Results}
\label{sec:simulations_and_empirical_procedure}

\begin{table*}[t!]
    \centering
    \scalebox{0.8}{
    \resizebox{1.1\linewidth}{!}{
    \begin{tabular}{lccccr}
         \toprule
         Estimation Pipeline & $(\gamma_{WZ.{\bf C}}^{\text{CI low}}, \gamma_{WZ.{\bf C}}^{\text{CI high}})$ & Est.~ACE & Bias & Conf. Interval (CI) & CI Cov. \\[0.1cm]
         \toprule
         P1M & {\color{blue} $(1.35, 1.42)^\checkmark$} & $1.304$ & ${\bf 0.004}$ & $(1.209, 1.394)$ & {\bf Yes} \\
		P1M, same & {\color{red} $(10^{16}, 10^{16})$} & $1.430$ & $0.130$ & $(1.405, 1.495)$ & No \\
		P2M & {\color{blue} $(1.82, 1.94)^\checkmark$} & $1.343$ & ${\bf 0.043}$ & $(1.273, 1.425)$ & {\bf Yes} \\
		P2M, same & {\color{red} $(7.9, 8.41)$} & $1.407$ & $0.107$ & $(1.376, 1.479)$ & No \\
         \bottomrule
    \end{tabular}
    }}
    \caption{
    \textbf{Fully synthetic results} with the true ACE equal to $1.3$. Here, {\color{blue} $\checkmark$} distinguishes settings that {\color{blue} passed} the odds ratio heuristic from those that {\color{red} failed} it, with $\gamma_{\text{high}} = 2$. Corresponding to Gotcha~\#3, ``same'' indicates we used the same instance of text to infer both $W$ and $Z$. 
    }
    \label{tab:fully_synthetic_simulation}
\end{table*}

\paragraph{RQs} In this section, we explore the following empirical research questions (RQs): How does Algorithm~\ref{alg:design} compare to other alternatives in terms of bias and confidence interval coverage of the estimated causal effects? Does our odds ratio heuristic effectively flag when to stop or proceed?   

In causal inference, empirical evaluation is difficult because it requires ground-truth labels for counterfactual outcomes of an individual under multiple versions of the treatment, data that is generally impossible to obtain \citep{holland1986statistics}. Thus, we turn to synthetic data and semi-synthetic data so we have access to the true ACE and $U$ to evaluate methods. 
Semi-synthetic experiments---which use real data for part of the DGP and then specify synthetic relationships for the remainder of the DGP---have been used extensively for other empirical evaluation of causal estimation methods; see \citet{shimoni2018benchmarking}; \citet{dorie2019automated}; \citet{veitch2020adapting}.\footnote{\citet{gentzel2019case} and \citet{keith2024rct} describe methods for evaluation that are more realistic than semi-synthetic evaluation by downsampling an RCT dataset to create an observational (confounded) dataset. However, for the proximal setting we investigated in this paper, we could not find a suitable existing RCT.} 
We describe the experimental set-ups, the causal estimation procedure used by all experiments, and finally, the results to our RQs. 

\begin{table}[t!]
    \centering
    \scalebox{0.9}{
    \begin{tabular}{l l l | c c | cc} 
         \toprule
         &&& \multicolumn{2}{c|}{$\gamma_{WZ.{U \bf C}}$} & \multicolumn{2}{c}{$\gamma_{WZ.{\bf C}}$ CI} \\
         $U$ & ${\bf T}_1^{\text{pre}}$ Cat. & ${\bf T}_2^{\text{pre}}$ Cat. & P1M & P2M & P1M & P2M \\
         \toprule
         A-Sis & Echo & Radiology & $1.626$ & $1.085$ & {\color{red} $(2.381, 2.962)$} & {\color{blue} $(1.372, 1.995)^\checkmark$} \\ 
         Heart & Echo & Nursing & $2.068$ & $1.156$ & {\color{red} $(2.298, 2.676)$} & {\color{blue} $(1.152, 1.376)^\checkmark$} \\ 
         A-Sis & Radiology & Nursing & $2.350$ & $1.328$ & {\color{red} $(4.337, 5.266)$} & {\color{red} $(2.050, 2.663)~~~$} \\ 
         \bottomrule
    \end{tabular}}
    \vspace{0.75em}
    \caption{\textbf{Semi-synthetic odds ratio heuristic} ($\gamma_{WZ.{\bf C}}$) as well as the oracle $\gamma_{WZ.U{\bf C}}$ for different categories of notes (Cat.). 
    We distinguish settings that {\color{blue} passed} the odds ratio heuristic ({\color{blue} $\checkmark$}) from those that {\color{red} failed}, with $\gamma_{\text{high}} = 2$.}
    \label{tab:semi_synthetic_odds_ratio}
\end{table}

\begin{figure}[t!]
    \centering
    \scalebox{0.38}{
        \includegraphics{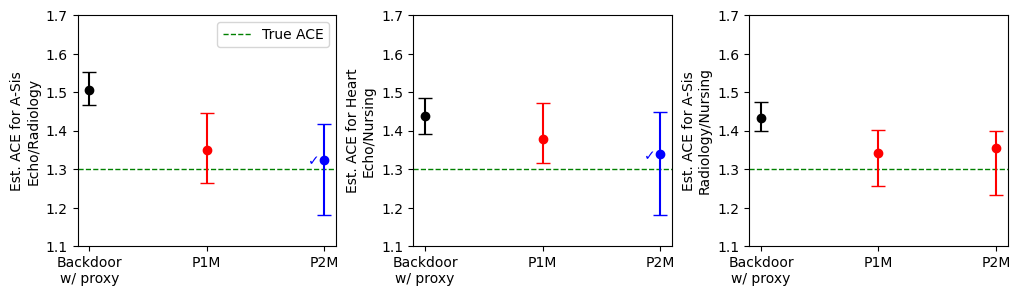}
    }
    \caption{
    \textbf{Semi-synthetic results} for ACE point estimates (dots) and 95\% CIs (bars). We distinguish settings that {\color{blue} passed} the odds ratio heuristic ({\color{blue} $\checkmark$}) from those that {\color{red} failed}, with $\gamma_{\text{high}} = 2$.}
    \label{fig:sim_results}
\end{figure}

\paragraph{Fully Synthetic Experiments}


We create our fully synthetic DGP based on the DAG in Figure~\ref{fig:gotchas}(d); see Appendix~\ref{app:fully_synthetic_dgp} for full details. To summarize, $A$ and $U$ are binary, and $Y$ and $C$ are continuous. We generate (very simple) synthetic text data with four continuous variables, $X_1, X_2, X_3, X_4$, as functions of $U$ and $C$. For training, we generate two realizations of these  variables, which we call ${\bf X}^{\text{train}}_1$ and ${\bf X}^{\text{train}}_2$, and likewise two  realizations for inference time, ${\bf X}^{\text{inf}}_1$ and ${\bf X}^{\text{inf}}_2$.

At inference time, we explore using one or two zero-shot models, which we refer to as \emph{Proximal 1-Model} (P1M) and \emph{Proximal 2-Model} (P2M), respectively. For one zero-shot model, we train a logistic regression classifier to predict the true $U$ from an aggregated variable $\widetilde{{\bf X}}^\text{train} = ({\bf X}^{\text{train}}_1 + {\bf X}^{\text{train}}_2)/2$ as $P_\theta(U =1 | \widetilde{{\bf X}}^\text{train})$\footnote{Of course, having access to the true $U$ may not qualify as ``zero-shot'' in the strict sense of the term, but we complement this idealized scenario with the difficult true zero-shot scenario in our semi-synthetic experiments.}. For the other zero-shot model, we use the following heuristic: predict $1$ if $X_1>1.1$ else $0$. P1M uses only the logistic regression model, and P2M uses both the logistic regression and heuristic models. 
Next, we vary whether Gotcha~\#3 holds at inference time for both P1M and P2M, i.e.~whether the models infer $Z$ and $W$ from only ${\bf X}^{\text{inf}}_1$ or both ${\bf X}^{\text{inf}}_1$ and ${\bf X}^{\text{inf}}_2$.

\paragraph{Semi-Synthetic Experiments}

For our semi-synthetic experiments, we use MIMIC-III, a de-identified dataset of patients admitted to critical care units at a large tertiary care hospital \citep{johnson2016mimic}. See Appendices~\ref{app:mimic_preprocessing} and \ref{app:mimic_semi_synthetic_dgp} for detailed pre-processing steps, variables, and DGP. To summarize, we use the following real data from MIMIC-III in our DGP: ICD-9 code diagnoses, demographic information, and unstructured text notes. We choose the four diagnoses for oracle $U$ that had the best F1 scores for a supervised bag-of-words logistic regression classifier (see Appendix~\ref{app:oracle_text_classifiers} for full results): \emph{atrial fibrillation} (Afib), \emph{congestive heart failure} (Heart), \emph{coronary atherosclerosis of the native coronary artery} (A-Sis), and \emph{hypertension} (Hypertension). To avoid Gotcha~\#2, we explicitly exclude discharge summaries, which are post-treatment. In each experiment, we choose two out of the following four note categories as ${\bf T}^{\text{pre}}_1$ and ${\bf T}^{\text{pre}}_2$: electrocardiogram (ECG), echocardiogram (Echo), Radiology, and Nursing notes. 
We hypothesize that notes written about different aspects of clinical care and by different providers will likely satisfy ${\bf T}^\text{pre}_1 \ci {\bf T}^\text{pre}_2 \mid U, {\bf C}$ since each individual will write a conditionally independent realization of the patient's status. In Appendix~\ref{app:text_independence}, we find distinct unigram vocabularies between the note sets that pass our falsification heuristic, lending preliminary evidence to this assumption of conditional independence. 
Consistent with the DAG in Figure~\ref{fig:gotchas}(d), we then synthetically generate binary $A$ and continuous $Y$.

For our zero-shot models, we use FLAN-T5 XXL (Flan) \cite{chung2022scaling} and OLMo-7B-Instruct (OLMo) \cite{ groeneveld2024olmo}, both ``open'' instruction-tuned large language models. In Appendix~\ref{app:estimation_details}, we elaborate on our choice of these models. Following \citet{ziems2023can}, we use the prompt template `\emph{Context: \{${\bf T}^{\text{pre}}$\} $\backslash$nIs it likely the patient has \{$U$\}?$\backslash$nConstraint: Even if you are uncertain, you must pick either ``Yes'' or ``No'' without using any other words.}' We assign $1$ when the output from Flan or OLMo contains `\emph{Yes}' and $0$ otherwise. P1M uses Flan for both proxies $W$ and $Z$ while P2M uses Flan for $W$ and OLMo for $Z$. We compare P1M and P2M to a baseline that uses the inferred proxy $W$ from Flan directly in a backdoor adjustment formula, corresponding to Gotcha~\#1. 

\paragraph{Estimation of Proximal g-formula}
For all experiments, we estimate the ACE by using the inferred $W$ and $Z$ in a two-stage linear regression estimator for the proximal g-formula provided by \citeauthor{tchetgen2020introduction}. Although the linearity assumption is restrictive, it allows us to focus on evaluating the efficacy of our proposed method for inferring text-based proxies as opposed to complications with non-linear proximal estimation \citep{mastouri2021proximal}. 
Briefly, we first fit a linear regression $\E[W | A, Z, {\bf C}]$. Next, we infer $\widehat{W}$, continuous probabilistic predictions for $W$, using the fitted  model. For the second stage, we fit a linear model for $\E[Y | A, \widehat{W}, {\bf C}]$. The coefficient for $A$ in this second linear model is the estimated ACE. We calculate 95\% confidence intervals for the ACE via the bootstrap percentile method \citep{wasserman2004all}. See Appendix~\ref{app:estimation_details} for additional implementation details (e.g., addressing class imbalance and sample splitting). 


\paragraph{Results}
Table~\ref{tab:fully_synthetic_simulation} has synthetic results, and Tables~\ref{tab:semi_synthetic_odds_ratio} and Figure~\ref{fig:sim_results} have selected results for the semi-synthetic experiments. See Appendices~\ref{app:oracle_text_classifiers}, \ref{app:additional_p2m_experiments}, and \ref{app:key_metrics_from_semi_syn} for additional results. In short, our empirical results corroborate preference for Algorithm~\ref{alg:design} over the baseline.

First, we discuss the ``gotcha'' methods we showed to be theoretically incorrect in Section~\ref{sec:gotchas}. Regarding Gotcha~\#1, Figure~\ref{fig:sim_results} shows that, across all settings of $U$, using the inferred $W$ directly in the backdoor adjustment formula results in estimates with large bias. Regarding Gotcha~\#3 in the fully synthetic experiments, using the same realization ${\bf X}^{\text{inf}}_1$ results in high bias---$0.130$ and $0.107$ for P1M and P2M, respectively, in Table~\ref{tab:fully_synthetic_simulation}---and CIs that do not cover the true ACE. Regarding Gotcha~\#4, as in Proposition 4, using a single zero-shot model (P1M) results in low bias in the idealized setting of the synthetic DGP (Table \ref{tab:fully_synthetic_simulation}; first row). However, using real clinical notes and Flan, the second columns of Figure~\ref{fig:sim_results} show that P1M produces estimates with higher bias compared to P2M across all three settings. 
We hypothesize this could be due to Flan over-relying on its pre-training and thus making similar predictions regardless of the clinical note. 


For all experiments we set $\gamma_{\text{high}}=2$.
With this setting, we find low bias and good CI coverage for the two settings that pass our heuristic (P2M for $U=\texttt{A-Sis}$ with Echo and Radiology notes and P2M for $U=\texttt{Heart}$ with Echo and Nursing notes) and biased estimates in other cases. Although the confidence intervals sometimes cover the true ACE in settings that fail the heuristic, the interval is skewed and appears will asymptotically converge to a biased value. This gives us confidence that Algorithm~\ref{alg:design} appropriately flags when to stop or proceed. 

Understanding \emph{why} the odds ratio heuristic fails for a specific application requires oracle data. However, it may be helpful for analysts to reason about scenarios that influence this failure, such as when ${\bf C}$ and $U$ are not a sufficient backdoor adjustment set; ${\bf T}^\text{pre}_1 \not \ci {\bf T}^\text{pre}_2 \mid U, {\bf C}$; or $Z$ and $W$ are poorly predictive of $U$.
Using oracle data, we examined these scenarios for a few settings of our semi-synthetic experiments. 
For example, Figure~\ref{fig:sim_results} (right-most panel) shows that $U$=A-Sis with nursing and radiology notes fails the odds ratio heuristic. We qualitatively examined samples of the clinical notes and found some had unmeasured confounding, e.g., a patient had lung cancer---a variable that was not in our ${\bf C}$ or $U$---that influenced descriptions in both ${\bf T}^\text{pre}_1$ and  ${\bf T}^\text{pre}_2$.
Figure~\ref{fig:additional_sim_results} (bottom right-most panel) shows that for $U$=hypertension with echocardiogram (echo) and nursing notes also fails the odds ratio heuristic. However, we hypothesize this is due not to text dependence (see Table~\ref{tab:combined-tfidf}), but rather that the proxies $W$ and $Z$ have accuracies of 0.49 and 0.52 respectively (when evaluated against oracle $U$), which is lower accuracy than predicting the majority class (Table~\ref{tab:metrics-tension-echo-nursing-p2m}). We leave to future work disaggregating the odds ratio heuristic into tests for these specific scenarios. 

\section{Conclusion, Limitations, and Future Work}
\label{sec:conclusion}

In this work we proposed a novel causal inference method for estimating causal effects in observational studies when a confounding variable is completely unobserved but unstructured text data is available to infer potential proxies. Our method uses distinct instances of pre-treatment text data, infers two proxies using two zero-shot models on the instances, and applies these proxies in the proximal g-formula.
We have shown why one should prefer our method to alternatives, both in theory and in the empirical results of synthetic and semi-synthetic experiments. 

If the conditions we present hold, the estimates will be unbiased. Further, as the underlying predictive accuracy of $Z$ and $W$ improve, the confidence interval width will narrow. Thus, we see promise in work improving the zero-shot performance of LLMs. Future work could possibly combine proxies from a greater number of zero shot models, i.e. inspired by work in \citet{yangstatistical}.


Although we are careful to infer valid proxies by design, it is generally impossible to empirically test whether conditions for proximal causal inference are fulfilled. To address this, we proposed an odds ratio falsification heuristic test, but acceptable values of $\gamma_{\text{high}}$ depend on the domain. If these are set incorrectly, our method could result in false positives (a setting for which the CI does not cover the true parameter and the estimated ACE is biased). In addition, our approach depends on the availability of pre-treatment text data and metadata to split text into independent pieces, which are not available across all applied settings. 

Although we use a clinical setting as our running example, our method is applicable to many other domains 
where it is infeasible to obtain ground-truth $U$ labels due to privacy constraints or annotation costs, e.g., 
social media or education studies with private messaging data or sensitive student coursework. Other directions for future work include incorporating non-linear proximal estimators (including ones that use alternative assumptions to the completeness condition in (P4) \citep{kallus2021causal}), expanding beyond text to proxies learned from other modalities (see \citet{knox2022testing}), providing more guidance for setting $\gamma_{\text{high}}$ in our odds ratio heuristic, extending our method to incorporate categorical $U, W, Z$, and using soft probabilistic outputs from the zero-shot classifiers.





\section*{Acknowledgements}

We are very grateful for feedback from Naoki Egami and Zach Wood-Doughty as well as helpful comments from anonymous reviewers from NeurIPS 2024. KK acknowledges support from the Allen Institute for Artificial Intelligence's Young Investigator Grant.  RB acknowledges support from the NSF CRII grant 2348287. The content of the information does not necessarily reflect the position or the policy of the Government, and no official endorsement should be inferred.


\bibliography{anthology,custom}
\bibliographystyle{acl_natbib}

\clearpage

\appendix

\section{Elaboration on Problem Restriction}
\label{sec:constraints}

Restriction (R1) is clearly present in settings for which we need to adjust for a confouding variable that is impossible or difficult to measure---for example, atrial fibrillation can go undiagnosed for years. A logical first attempt to mitigate this constraint is to train a supervised classifier on a subset of the data to create proxies for the rest of the dataset, as explored in \citet{wood2018challenges}. This, however, requires humans to hand-label large amounts of text data.
If labeling takes place on a crowd-sourcing platform, e.g., Amazon's Mechanical Turk, crowd-sourcing costs can quickly sky-rocket and often exceed tens of thousands of dollars, even for small datasets. 
Furthermore, many datasets---particularly those in clinical settings---require domain expertise (e.g., trained medical doctors) which will likely increase costs significantly and limit the availability of labelers. 

Cost and expertise of labelers aside, the possibility of supervised learning is further restricted by patient privacy legislation. We typically cannot transport sensitive data regarding patients' personal data to platforms such as Amazon's Mechanical Turk for labeling due to the Health Insurance Portability and Accountability Act (HIPAA)\footnote{\url{https://www.hhs.gov/hipaa/index.html}} in the United States. In addition, legal acts such as the General Data Protection Regulation (GDPR)\footnote{\url{https://www.consilium.europa.eu/en/policies/data-protection/data-protection-regulation/}} in the European Union and the California Consumer Privacy Act (CCPA)\footnote{\url{https://oag.ca.gov/privacy/ccpa}} restrict the movement and repurposing of user data across platforms. Our proposed method overcomes restriction (R1) by using zero-shot learners that do not require previously labeled examples.

\section{Using Post-Treatment or Actionable Text Data}
\label{app:post_treatment}

Here, we first describe a scenario for which we may infer valid proxies using both post-treatment and pre-treatment text. In Figure~\ref{fig:post_treatment}, ${\bf T}^{\text{post}}$ is post-treatment text whereas ${\bf T}^{\text{pre}}$ is pre-treatment text. Through simple d-separation arguments, we can see that each of (P1-P3) are fulfilled. Hence, it is still possible to use post-treatment text to generate valid proxies as long as we use pre-treatment text to generate one proxy and post-treatment text to generate the other. However, requiring both instances of text data to be pre-treatment is simpler to implement and validate, so we recommend this rule in our final method. 

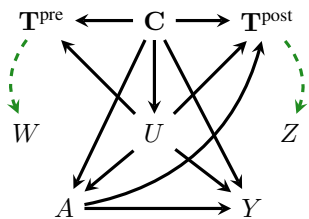
\begin{figure}[ht]
    \centering
    \begin{tikzpicture}[>=stealth, node distance=1.5cm]
		\tikzstyle{square} = [draw, thick, minimum size=1.0mm, inner sep=3pt]
        \begin{scope}[xshift=0cm]
            \path[->, very thick]
            node[] (a) {$A$}
            node[right of=a, xshift=1cm] (y) {$Y$}
            node[above of=a, xshift=1.2cm, yshift=-0.5cm] (u) {$U$}
            node[above of=u] (c) {${\bf C}$}
            node[right of=c] (t_post) {${\bf T}^{\text{post}}$}
            node[left of=c] (t_pre) {${\bf T}^{\text{pre}}$}
            node[above of=y, xshift=0.5cm, yshift=-0.5cm] (z) {$Z$}
            node[above of=a, xshift=-0.5cm, yshift=-0.5cm] (w) {$W$}
            
            (a) edge[black] (y)
            (u) edge[black] (a)
            (u) edge[black] (y)
            (c) edge[black] (a)
            (c) edge[black] (y)
            (c) edge[black] (u)
            (c) edge[black] (t_post)
            (c) edge[black] (t_pre)
            (u) edge[black] (t_pre)
            (u) edge[black] (t_post)
            (t_post) edge[purple, dashed, bend left] (z)
            (t_pre) edge[purple, dashed, bend right] (w)
            (a) edge[black, bend right=32] (t_post)
        ;
        \end{scope}
    \end{tikzpicture}
    \caption{Using both pre-treatment and post-treatment text to generate valid proxies.}
    \label{fig:post_treatment}
\end{figure}

Figure~\ref{fig:actionable_text} further illustrates an example where we can use actionable text to generate valid proxies. Suppose ${\bf T}^{\text{act}}$ is an instance of text data that influences a health practitioner's decision to assign different treatments to a patient, and ${\bf T}^{\text{pre}}$ is an instance of pre-treatment text that is not actionable. Then, ${\bf T}^{\text{act}}$ has a direct edge to the treatment $A$. Despite the addition of this edge, we can apply d-separation again to verify that (P1-P3) are still fulfilled. Therefore, our proposed method is still valid in a more relaxed setting where clinicians use one instance of the text data to make treatment decisions as long as the other instance of text data is both pre-treatment and not actionable.

\begin{figure}[ht]
    \centering
    \begin{tikzpicture}[>=stealth, node distance=1.5cm]
		\tikzstyle{square} = [draw, thick, minimum size=1.0mm, inner sep=3pt]
        \begin{scope}[xshift=0cm]
            \path[->, very thick]
            node[] (a) {$A$}
            node[right of=a, xshift=1cm] (y) {$Y$}
            node[above of=a, xshift=1.2cm, yshift=-0.5cm] (u) {$U$}
            node[above of=u] (c) {${\bf C}$}
            node[right of=c] (t_post) {${\bf T}^{\text{act}}$}
            node[left of=c] (t_pre) {${\bf T}^{\text{pre}}$}
            node[above of=y, xshift=0.5cm, yshift=-0.5cm] (z) {$Z$}
            node[above of=a, xshift=-0.5cm, yshift=-0.5cm] (w) {$W$}
            
            (a) edge[black] (y)
            (u) edge[black] (a)
            (u) edge[black] (y)
            (c) edge[black] (a)
            (c) edge[black] (y)
            (c) edge[black] (u)
            (c) edge[black] (t_post)
            (c) edge[black] (t_pre)
            (u) edge[black] (t_pre)
            (u) edge[black] (t_post)
            (t_post) edge[purple, dashed, bend left] (z)
            (t_pre) edge[purple, dashed, bend right] (w)
            (t_post) edge[black, bend left=32] (a)
        ;
        \end{scope}
    \end{tikzpicture}
    \caption{Using one instance of actionable text data and another instance of non-actionable text data to generate valid proxies.}
    \label{fig:actionable_text}
\end{figure}
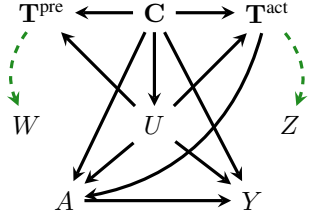

\section{Fully Synthetic Data-Generating Process}
\label{app:fully_synthetic_dgp}

The DAG representing the fully synthetic data-generating process is shown in Figure~\ref{fig:fully_synthetic_dgp}. We simulate $U$, a binary variable, as follows
\begin{align*}
    U \sim \text{Bernoulli}(0.48)
\end{align*}
Next, we simulate baseline confounders and synthetic ``text'' $X_1, X_2, X_3, X_4$ as follows:
\begin{align*}
    C &\sim \mathcal{N}(0, 1) \\
    X_1 &\sim \mathcal{N}(0, 1) + 1.95*U + 3*C \\
    X_2 &\sim \mathcal{N}(0, 1) + \text{exp}(X_1) + U + 3*C \\
    X_3 &\sim \mathcal{N}(0, 1) + 1.25*U + 3*C \\
    X_4 &\sim \mathcal{N}(0, 1) + {X_3}^2 + 0.5*{X_3}^3 \\
    &+ U + 3*C \\
\end{align*}
Finally, the treatment and outcome variables are generated via
\begin{align*}
    &p(A=1) = \text{ expit}(0.8 * U + C - 0.3) \\
    & A \sim \text{Bernoulli}(p(A=1)) \\
    &Y \sim \mathcal{N}(0, 1) + 1.3*A + 0.8*U + C
\end{align*}
\begin{figure}[ht]
    \centering
    \scalebox{1}{
	\begin{tikzpicture}[>=stealth, node distance=1.5cm]
		\tikzstyle{square} = [draw, thick, minimum size=1.0mm, inner sep=3pt]
        \begin{scope}[xshift=0cm]
            \path[->, very thick]
            node[] (a) {$A$}
            node[right of=a, xshift=1cm] (y) {$Y$}
            node[above of=a, xshift=1.2cm, yshift=-0.5cm] (u) {$U$}
            node[above of=u] (c) {$C$}
            node[right of=c] (x1) {$X_1$}
            node[right of=x1] (x2) {$X_2$}
            node[left of=c] (x3) {$X_3$}
            node[left of=x3] (x4) {$X_4$}
            
            (a) edge[black] (y)
            (u) edge[black] (a)
            (u) edge[black] (y)
            (c) edge[black] (a)
            (c) edge[black] (y)
            (c) edge[black] (x1)
            (c) edge[black, bend left] (x2)
            (c) edge[black] (x3)
            (c) edge[black, bend right] (x4)
            (u) edge[black] (x1)
            (u) edge[black] (x2)
            (u) edge[black] (x3)
            (u) edge[black] (x4)
            (x1) edge[black] (x2)
            (x3) edge[black] (x4)
        ;
        \end{scope}
    \end{tikzpicture}
    }
    \caption{DAG showing the fully synthetic data-generating process.}
    \label{fig:fully_synthetic_dgp}
\end{figure}
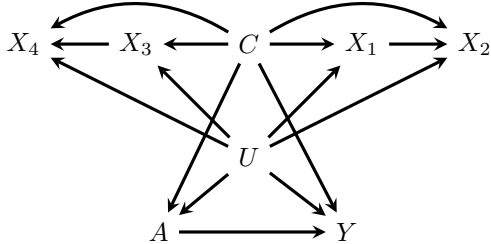

In the fully synthetic data-generating process, increasing the coefficient for the variable $C$ decreases the odds ratio conditional on observed confounders $\gamma_{WZ.{\bf C}}$ while decreasing the coefficient for the variable $C$ increases $\gamma_{WZ.{\bf C}}$.

\section{MIMIC-III Pre-Processing Steps}
\label{app:mimic_preprocessing}

In the MIMIC-III dataset \citep{johnson2016mimic}, data is organized into multiple tables. Patients are anonymized and each have a unique identifier. In addition, each unique hospital admission is assigned a hospital admission identification number (HAID). Each patient can have multiple HAIDs, and each HAID can have multiple notes (e.g., an ECG note and a nursing note). Our pre-processing steps are outlined in the following steps:
\begin{enumerate}
    \item We start with the table in MIMIC-III containing the clinicians' notes for all patients and drop rows of data where the HAID has a missing value. We refer to this main table we are working with as our {\bf reference} table.
    \item Next, for each unique patient in the {\bf reference} table, we select one HAID by choosing the HAID with the earliest ``chart date,'' the day on which the clinician's note was recorded. We limit each patient to only one hospital admission to better fit the assumption of i.i.d.~ data (which would be violated if there were multiple hospital admissions for a single patient since these multiple admissions would be dependent). 
    \item We drop all clinicians' notes that have the category label ``Discharge summary'' because such notes would be considered post-treatment text that violate (P2) as we have shown in Section~\ref{sec:gotchas}. For each HAID, we combine separate notes of the same category into one string.
    \item We now move on to the table in MIMIC-III that contains information on which patients were given which medical diagnoses for each HAID. For each HAID in the {\bf reference} table, we select all the diagnoses (ICD-9 codes) assigned to the patient in that hospital visit.
    \item The $10$ most common diagnoses for the patients in our {\bf reference} table are, in order, hypertension, coronary atherosclerosis of the native coronary artery, atrial fibrillation, congestive heart failure, diabetes mellitus, hyperlipidemia, acute kidney failure, need for vaccination against viral hepatitis, suspected newborn infection, and acute respiratory failure. In our {\bf reference} table, we add a column for each of these $10$ diagnoses and record a $0$ or a $1$ that indicates whether each patient was assigned a diagnosis for each condition on a particular hospital admission.
    \item From the table containing baseline information for each patient, we append onto the {\bf reference} table the gender and age of each patient. We infer each patient's age by subtracting the ``chart date'' of the patient's clinician's note by the patient's date of birth. We drop from the {\bf reference} table all patients with an age less than $18$ or greater than $100$. We drop the diagnosis \emph{suspected newborn infection} from the {\bf reference} table because we have eliminated all patients younger than $18$. At this stage in the preprocessing pipeline, the total number of HAIDs, or rows of data, in the {\bf reference} table is 990,172.
    \item In the final step, we evaluate which pairs of note categories are the most often simultaneously recorded for patients on the same hospital admission in the {\bf reference} table. The top $5$ simultaneously recorded note category pairs are Electrocardiogram (ECG) and Radiology with 27,963 HAIDs, Nursing and ECG with 18,505 HAIDs, Nursing and Radiology with 18,302 HAIDs, ECG and Echocardiogram (Echo) with 17,291 HAIDs, and Echo and Radiology with 15,558 HAIDs. We consider these 5 pairs of note categories for downstream proximal causal inference.
\end{enumerate}

%

\section{Diagnosing Signal With Oracle Text Classifiers}
\label{app:oracle_text_classifiers}

To ensure that the text contains predictive signal for the diagnoses (a precondition necessary to use our zero-shot classifiers later on in our analysis), we train supervised classifiers to predict the diagnoses, $Y$, separately using text data from each note category, $X$, with a bag-of-words representation. 

Our procedure is as follows. We first truncate all of the clinicians' notes data to $470$ tokens (due to the context window of Flan-T5 which we use later in our pipeline). We ignore oracles and note categories combinations where positivity rates are $0$ or $1$ after subsetting on the note category in question. We then use the scikit-learn library's \texttt{CountVectorizer} to convert the text data to bag of words features with vocabulary size $5,000$ 
and train a linear logistic regression with the hyperparameter \texttt{penalty} set to ``\texttt{None}''. We further calculate the F1 score, accuracy, precision, and recall with the scikit-learn library \cite{scikit-learn}. The results are summarized in Table~\ref{tab:f1_scores}.

We choose oracles with at least two note categories that achieve F1 scores greater than 0.7 for downstream inference with zero-shot classifiers. Such oracles and note categories are:
\begin{enumerate}
    \item Atrial fibrillation (A-fib) with ECG, Echo, and Nursing notes.
    \item Congestive heart failure (Heart) with Echo and Nursing notes.
    \item Coronary atherosclerosis of the native coronary artery (A-sis) with Echo, Nursing, and Radiology notes.
    \item Hypertension with Echo and Nursing notes.
\end{enumerate}


%
\begin{table}[ht]
    \centering
    \scalebox{0.86}{
    \begin{tabular}{l l | c c c c c} 
         \toprule
         Diagnosis (Possible $U$) & Note Category & $p(U=1)$ & F1 Score & Accuracy & Precision & Recall \\
         \toprule
         Hyperlipidemia & Nursing & 0.087 & 1.000 & 1.000 & 1.000 & 1.000 \\ 
         Acute respiratory failure & Nursing & 0.132 & 1.000 & 1.000 & 1.000 & 1.000 \\ 
         \textbf{Coronary atherosclerosis (A-sis)} & Nursing & 0.279 & 0.993 & 0.996 & 0.995 & 0.991 \\ 
         Acute kidney failure & Nursing & 0.135 & 0.958 & 0.989 & 0.981 & 0.937 \\ 
         \textbf{Atrial fibrillation (A-fib)} & Nursing & 0.231 & 0.879 & 0.946 & 0.909 & 0.851 \\ 
         \textbf{Coronary atherosclerosis (A-sis)} & Echo & 0.375 & 0.844 & 0.888 & 0.885 & 0.807 \\
         \textbf{Coronary atherosclerosis (A-sis)} & Radiology & 0.249 & 0.843 & 0.926 & 0.901 & 0.792 \\ 
         \textbf{Congestive heart failure (Heart)} & Nursing & 0.225 & 0.789 & 0.910 & 0.840 & 0.744 \\ 
         \textbf{Hypertension} & Nursing & 0.417 & 0.762 & 0.806 & 0.779 & 0.747 \\ 
         \textbf{Congestive heart failure (Heart)} & Echo & 0.326 & 0.759 & 0.853 & 0.813 & 0.713 \\ 
         \textbf{Atrial fibrillation (A-fib)} & Echo & 0.328 & 0.750 & 0.850 & 0.827 & 0.686 \\ 
         Diabetes & Nursing & 0.168 & 0.744 & 0.922 & 0.828 & 0.675 \\ 
         \textbf{Atrial fibrillation (A-fib)} & ECG & 0.263 & 0.742 & 0.883 & 0.886 & 0.638 \\ 
         \textbf{Hypertension} & Echo & 0.457 & 0.712 & 0.744 & 0.733 & 0.693 \\
        \hdashline
         Hypertension & Radiology & 0.420 & 0.679 & 0.741 & 0.707 & 0.653 \\ 
         Acute respiratory failure & Radiology & 0.145 & 0.675 & 0.917 & 0.781 & 0.594 \\
         Hyperlipidemia & Echo & 0.226 & 0.662 & 0.866 & 0.768 & 0.581 \\ 
         Congestive heart failure & Radiology & 0.207 & 0.661 & 0.877 & 0.771 & 0.578 \\ 
         Atrial fibrillation & Radiology & 0.233 & 0.655 & 0.858 & 0.761 & 0.574 \\ 
         Acute respiratory failure & Echo & 0.179 & 0.622 & 0.887 & 0.777 & 0.519 \\ 
         Acute kidney failure & Radiology & 0.154 & 0.607 & 0.898 & 0.748 & 0.511 \\ 
         Coronary atherosclerosis & ECG & 0.297 & 0.590 & 0.806 & 0.791 & 0.471 \\ 
         Hyperlipidemia & Radiology & 0.163 & 0.578 & 0.885 & 0.723 & 0.481 \\ 
         Acute kidney failure & Echo & 0.194 & 0.571 & 0.863 & 0.732 & 0.468 \\ 
         Congestive heart failure & ECG & 0.238 & 0.555 & 0.835 & 0.772 & 0.433 \\ 
         Hypertension & ECG & 0.445 & 0.541 & 0.658 & 0.672 & 0.453 \\ 
         Diabetes & Echo & 0.201 & 0.524 & 0.851 & 0.729 & 0.409 \\ 
         Diabetes & Radiology & 0.172 & 0.453 & 0.860 & 0.688 & 0.338 \\
         Acute respiratory failure & ECG & 0.148 & 0.391 & 0.879 & 0.769 & 0.262 \\ 
         Hyperlipidemia & ECG & 0.182 & 0.381 & 0.849 & 0.746 & 0.256 \\ 
         Acute kidney failure & ECG & 0.167 & 0.365 & 0.861 & 0.774 & 0.239 \\ 
         Diabetes & ECG & 0.187 & 0.307 & 0.839 & 0.785 & 0.191 \\ 
         \bottomrule \\
    \end{tabular}}
    \caption{Descending order of F1 scores (as well as accuracy, precision and recall) from bag of words linear logistic regressions predicting various medical diagnoses from the MIMIC-III dataset using different medical note categories as input. We set $0.7$ as our F1 score cutoff for choosing the diagnoses to use in downstream analysis with zero-shot classifiers (dashed horizontal line). We bold the diagnoses that passed this threshold and also have at least two note categories. We used these (bolded) diagnoses in the final oracle $U$ in downstream analyses.}
    \label{tab:f1_scores}
\end{table}

\section{LLM and Estimation Details}
\label{app:estimation_details}


\paragraph{Choosing LLMs: FLAN and OLMo} We used the large language models FLAN-T5 XXL and OLMo-7B-Instruct for inferring the oracle diagnoses from text data \cite{chung2022scaling, groeneveld2024olmo}. These are both fully ``open'' models in that we have full knowledge of their pre-training data, training objectives, and training details, and can be downloaded and updated on a local machine, all of which are important for reproducible science \citep{palmer2024using}. We chose these two models because they have been fine-tuned for zero-shot classification and because they have been trained on different text corpora. Furthermore, we hypothesize that Flan will perform well on clinical notes because, according to \citet{dodge2021documenting}'s analysis, C4 (the pre-training data in Flan-T5) has among its top 25 domains \texttt{patents.google.com, journals.plos.org, link.springer.com, www.ncbi.nlm.nih.gov}, all of which likely contain medical text.

\paragraph{Creating inferences from FLAN and OLMo} We use the following string as the prompt for both models: `\emph{Context: \{${\bf T}^{\text{pre}}$\} $\backslash$nIs it likely the patient has \{$U$\}?$\backslash$nConstraint: Even if you are uncertain, you must pick either ``Yes'' or ``No'' without using any other words.}' We consider only the first five tokens of output for both models and assign a value of $1$ for the proxy if the output string converted to all lowercase characters contains the word `\emph{yes}' and $0$ otherwise.

\paragraph{Odds ratio confidence interval estimation} We calculate a point estimate for the odds ratio $\gamma_{WZ.{\bf C}}$ using the scikit-learn library \citep{scikit-learn} by first training a linear logistic regression with $W$ as the target outcome and $\{Z\} \cup {\bf C}$ as the features. Whenever the positivity rate of $W$ is less than $0.2$ or greater than $0.8$, i.e. there is a class imbalance, we set the hyperparameter \texttt{class\_weight} to \texttt{``balanced''}. This uses the values of $W$ to automatically adjust weights inversely proportional to class frequencies in the dataset. Regardless of whether we adjust for class imbalance, we set the hyperparameter \texttt{penalty} to \texttt{None} to turn off regularization. Finally, we calculate the natural exponent of the coefficient of $Z$ as the point estimate for the odds ratio $\gamma_{WZ.{\bf C}}$.

We calculate the confidence interval for the odds ratio by drawing $200$ bootstrap samples and repeating the steps above for each bootstrap sample to create a bootstrap distribution. We then take the $0.025$th and $0.975$th percentiles of the bootstrap distribution as $\gamma_{WZ.{\bf C}}^{\text{CI low}}$ and $\gamma_{WZ.{\bf C}}^{\text{CI low}}$, respectively.

\paragraph{Two-stage linear proximal causal inference estimator for the ACE} 
Following sample splitting from the causal inference literature \cite{hansen2000sample}, 
we start by splitting the semi-synthetic dataset into two splits---split $1$ and split $2$---where both splits are $50\%$ of the original dataset. We then train a linear logistic regression using split $1$ with $W$ as the target outcome and $\{A, Z\} \cup {\bf C}$ as the features. Like in the estimation procedure for the odds ratio, we set the hyperparameter \texttt{class\_weight} to \texttt{``balanced''} whenever there is a class imbalance in the target outcome $W$ and always set the hyperparameter \texttt{penalty} to \texttt{None}. Next, we make probabilistic predictions $\widehat{W}$ for $W$ with the previously trained linear logistic regression model using split $2$. Finally, we train a linear regression using split $2$ of the data with $Y$ as the target outcome and $\{A, \widehat{W}\} \cup {\bf C}$ as the features. The coefficient of $A$ in the trained linear regression model is the estimate for the ACE. 

We calculate the confidence interval for the ACE by drawing $200$ bootstrap samples and repeating the steps above for each bootstrap sample to create a bootstrap distribution. We then take the $0.025$th and $0.975$th percentiles of the bootstrap distribution as the lower and upper bound of the confidence interval for the ACE, respectively.

\section{Semi-Synthetic Data Generating Process}
\label{app:mimic_semi_synthetic_dgp}

The DAG representing the semi-synthetic data-generating process is shown in Figure \ref{fig:semi_synthetic_dgp}. The variable $U$ represents \texttt{atrial fibrillation}, \texttt{congestive heart failure}, \texttt{coronary atherosclerosis of the native coronary artery}, or \texttt{hypertension}, depending on the setting. We use the {\bf reference} table created in the pre-processing steps described in Section~\ref{app:mimic_preprocessing} of the Appendix and start by normalizing the variable Age with the formula
\begin{align*}
    \text{Age}_i &= \frac{\text{Age}_i - \bar{X}_{\text{Age}}}{\sigma_{\text{Age}}}.
\end{align*}
We then simulate draws of the binary variable $A$ via 
\begin{align*}
    p(A=1) &= \text{expit}(U + 0.9*\text{Gender} \\
    &+ 0.9*\text{Age}) \\
    A &\sim \text{Bernoulli}(p(A=1))
\end{align*}
Next, we simulate draws of the continuous variable $Y$ from: 
\begin{align*}
    Y &\sim \mathcal{N}(0, 1) + 1.3*A + U \\
    &+ 0.9*\text{Gender} + 0.9*\text{Age}
\end{align*}
When estimating the ACE using the two-stage linear regression described in Section \ref{sec:simulations_and_empirical_procedure}, we condition on the baseline covariates Age and Gender as well the other diagnoses not being used as $U$. 
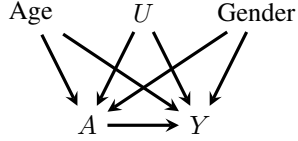
\begin{figure}[ht]
    \centering
    \scalebox{1}{
	    \begin{tikzpicture}[>=stealth, node distance=1.5cm]
			\tikzstyle{square} = [draw, thick, minimum size=1.0mm, inner sep=3pt]
			\begin{scope}
				\path[->, very thick]
				node[] (a) {$A$}
                node[right of=a] (y) {$Y$}
                node[above of=a, xshift=0.75cm] (u) {$U$}
                node[left of=u] (age) {Age}
                node[right of=u] (gen) {Gender}
                
                (a) edge[black] (y)
                (u) edge[black] (a)
                (u) edge[black] (y)
                (age) edge[black] (a)
                (age) edge[black] (y)
                (gen) edge[black] (a)
                (gen) edge[black] (y)
				;
			\end{scope}
		\end{tikzpicture}
    }
    \caption{Causal DAG showing the semi-synthetic data-generating process.}
    \label{fig:semi_synthetic_dgp}
\end{figure}

\section{Additional P2M Experiments}
\label{app:additional_p2m_experiments}

\begin{figure}[ht]
    \centering
    \scalebox{0.5}{
        \includegraphics{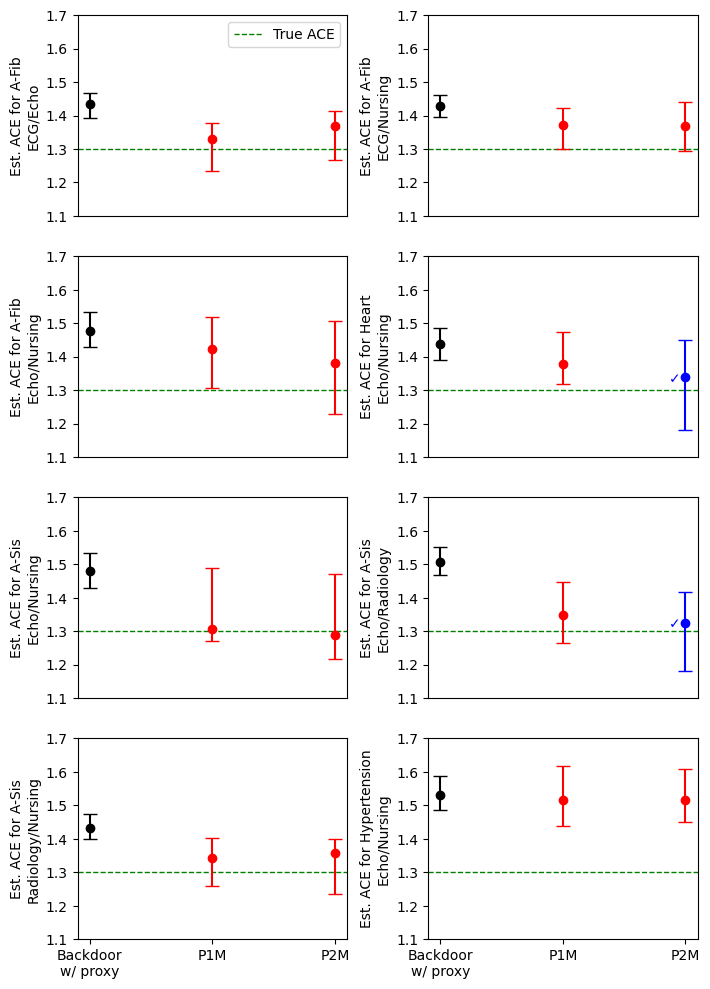}
    }
    \caption{{\textbf{Semi-synthetic results} for ACE point estimates (dots) and confidence intervals (bars) for all oracles and note categories that we detected sufficient signal in. 
    Here, {\color{blue} $\checkmark$} distinguishes settings that {\color{blue} passed} the odds ratio heuristic from those that {\color{red} failed} it, with $\gamma_{\text{high}} = 2$.}}
    \label{fig:additional_sim_results}
\end{figure}

In Figure~\ref{fig:additional_sim_results}, we report additional experimental results from P1M using Flan to infer proxies from both note categories and P2M using Flan and OLMo to infer proxies from one note category each. As expected, using one inferred proxy from an LLM directly in backdoor adjustment produces biased estimates for the ACE. When using P1M and P2M, the odds ratio falsification heuristic successfully flags invalid estimates of the ACE. In the two cases where the inferred proxies pass the odds ratio falsification heuristic, we observe valid estimates for the ACE.


A notable example of when the odds ratio heuristic correctly failed is for Echo and Nursing notes on the diagnosis hypertension. As shown in Figure~\ref{fig:additional_sim_results}, the estimates for the ACE for both P1M and P2M are far from the true ACE of 1.3. This is explained by Table~\ref{tab:semi_synthetic_Hypertension_Echo_Nursing_proximal_P2M} where the oracle odds ratios between both proxies $W$ and $Z$ and the oracle $U$ are nearly $1$, suggesting that the zero-shot classifiers were not able to accurately predict the oracle. As expected in such a scenario, the estimates for the ACE are biased, and our odds ratio heuristic successfully detects the violation of Assumption (P4) for proximal causal inference.

\section{Examining Text Conditional Independence}
\label{app:text_independence}

Recall, our procedure requires us to split text such that ${\bf T}^\text{pre}_1 \not \ci {\bf T}^\text{pre}_2 \mid U, {\bf C}$. This condition must hold at inference time when practioners are using our procedure and it must hold in the ``real'' part of our semi-synthetic experiments in Section~\ref{sec:simulations_and_empirical_procedure}. Thus, for our semi-synthetic experiments with real-world clinical notes, we aim to investigate when this condition does or does not hold. To the best of our knowledge, it is an open problem on how to perform (conditional) independence tests on high-dimensional, compositional language data. Instead, we turn to an approximation that can lend us some preliminary evidence and examine the probabilities of top unigram features for each pairing of text categories. We show the results in  Tables~\ref{tab:combined-tfidf} and \ref{tab:separate-tfidf} below.


After tokenization into unigrams, we select the top five features via tf-idf scores. The tf-idf scores are created under two settings: (1) fitting tf-idf on both note categories \textbf{combined}, or (2) fitting tf-idf \text{separately} on each note category. Unique features (unigrams) must occur in a minimum of 10\% of the documents and a maximum of 90\% of the documents to be considered. Note, when we look at combinations of note pairings we subset to patients that have \emph{both} notes, so subsets of documents are overlapping but not the exact same, resulting in different top features for the same category in different note pairs.

\begin{table*}[h!]
\centering
\resizebox{0.63\linewidth}{!}{
\begin{tabular}{lccccc}
\toprule
 & aorta & mild & mildly & rhythm & thickened \\ \midrule
ECG       & 0.000     & 0.026     & 0.003     & 0.851     & 0.000     \\
Echo       & 0.897     & 0.847     & 0.845     & 0.150     & 0.743     \\ \toprule
 & hr & name & rhythm & st & the \\ \midrule
ECG       & 0.000     & 0.004     & 0.824     & 0.581     & 0.638     \\
Nursing       & 0.872     & 0.816     & 0.113     & 0.266     & 0.647     \\ 
\toprule
 & chest & contrast & left & pm & right \\ \midrule
ECG       & 0.004     & 0.000     & 0.385     & 0.000     & 0.175     \\
Radiology       & 0.845     & 0.517     & 0.824     & 0.876     & 0.858     \\ 
\toprule
 & mild & mildly & name & thickened & was \\ \midrule
Echo       & 0.830     & 0.845     & 0.137     & 0.734     & 0.388     \\
Nursing       & 0.094     & 0.018     & 0.810     & 0.004     & 0.596     \\ 
\toprule
 & chest & dilated & mild & mildly & thickened \\ \midrule
Echo       & 0.072     & 0.695     & 0.839     & 0.838     & 0.732     \\
Radiology       & 0.883     & 0.071     & 0.420     & 0.102     & 0.018     \\ 
\toprule
 & chest & ct & left & right & was \\ \midrule
Nursing       & 0.306     & 0.521     & 0.389     & 0.339     & 0.600     \\
Radiology      & 0.834     & 0.556     & 0.827     & 0.856     & 0.417     \\ \bottomrule
\end{tabular}}
\caption{\textbf{Combined tf-idf.} For each pair of note categories (subsequent rows), we selected the top five unigram features (columns) via tf-idf scores on the combined note pairs. We report the proportion of notes in each category that contain that feature.}
\label{tab:combined-tfidf}
\end{table*}

\begin{table*}[h!]
\centering
\resizebox{0.5\linewidth}{!}{
\begin{tabular}{l l}
\toprule
ECG  & {[}is, of, rhythm, the, wave{]} \\
Echo & {[}aorta, mild, mildly, of, thickened{]} \\ 
\toprule 
ECG & {[}of, rhythm, st, the, wave{]} \\
Nursing  & {[}hr, is, name, this, was{]} \\ 
\toprule
ECG & {[}is, of, rhythm, the, wave{]} \\
Radiology & {[}chest, contrast, left, right, to{]} \\ \toprule
Echo & {[}dilated, mild, mildly, thickened, with{]} \\
Nursing & {[}hr, is, name, this, was{]} \\ 
\toprule
Echo & {[}aorta, mild, mildly, of, thickened{]} \\
Radiology & {[}chest, contrast, left, pm, right{]} \\ \toprule
Nursing & {[}hr, is, name, neuro, this{]} \\
Radiology & {[}chest, contrast, left, right, to{]} \\ \bottomrule
\end{tabular}}
\caption{\textbf{Separate tf-idf.} For each pair of note categories (subsequent rows), we report the top five unigram features via tf-idf scores that were fit on each note category separately.}
\label{tab:separate-tfidf}
\end{table*}

\section{Key Metrics from Semi-Synthetic Simulations}
\label{app:key_metrics_from_semi_syn}

We report key metrics, mostly requiring access to the oracle $U$, for the semi-synthetic simulations in the tables below. Although these metrics will typically not be available to practitioners at inference time, they are helpful in understanding the relationship between the inferred proxies $W$ and $Z$ under various conditions. The odds ratio values between the proxies and the oracle, $\gamma_{WU. {\bf C}}$ and $\gamma_{ZU. {\bf C}}$, show how correlated the proxies are with the oracle, an important indicator for (P4). The difference between $1$ and the positivity rate of the oracle describes the minimum accuracy threshold for which the inferred proxy must perform to be more accurate than simply predicting the majority class.

\begin{table*}[h!]
	\centering
	\resizebox{0.9\linewidth}{!}{
		\begin{tabular}{l l l r}
			\toprule
			Oracle? & Proxies & Metric & $U$=A-Fib \\ [0.5ex]
			\toprule
			Yes & -- & 1 - $p(U=1)$ & 0.665 \\ 
			\midrule\
			\multirow{5}{*}{Yes} &
			\multirow{5}{*}{$W$ from Flan on ${\bf T}^{\text{pre}}_1$ (ECG)} & $\gamma_{WU.{\bf C}}$& 21.255 \\ 
			&& Accuracy & 0.802 \\
			&& $p(W=1)$ & 0.186 \\
			&& Precision & 0.870 \\
			&& Recall & 0.481 \\
			\midrule
			\multirow{5}{*}{Yes} &
			\multirow{5}{*}{$Z$ from Flan on ${\bf T}^{\text{pre}}_1$ (Echo)} & $\gamma_{ZU.{\bf C}}$& 17.815 \\ 
			&& Accuracy & 0.716 \\
			&& $p(Z=1)$ & 0.064 \\
			&& Precision & 0.899 \\
			&& Recall & 0.172 \\
			\midrule
			No & $Z, W$ & Raw Agreement Rate; $p(W=Z)$ & 0.841 \\ 
			\bottomrule
		\end{tabular}}
	\caption{Key metrics for the diagnosis atrial fibrillation when inferring proxies from ECG and Echo clinicians' notes with one zero-shot classifier.}
	\label{tab:semi_synthetic_A-Fib_ECG_Echo_proximal_P1M}
\end{table*}

\begin{table*}[h!]
	\centering
	\resizebox{0.9\linewidth}{!}{
		\begin{tabular}{l l l r}
			\toprule
			Oracle? & Proxies & Metric & $U$=A-Fib \\ [0.5ex]
			\toprule
			Yes & -- & 1 - $p(U=1)$ & 0.665 \\ 
			\midrule\
			\multirow{5}{*}{Yes} &
			\multirow{5}{*}{$W$ from Flan on ${\bf T}^{\text{pre}}_1$ (ECG)} & $\gamma_{WU.{\bf C}}$& 21.255 \\ 
			&& Accuracy & 0.802 \\
			&& $p(W=1)$ & 0.186 \\
			&& Precision & 0.870 \\
			&& Recall & 0.481 \\
			\midrule
			\multirow{5}{*}{Yes} &
			\multirow{5}{*}{$Z$ from OLMo on ${\bf T}^{\text{pre}}_1$ (Echo)} & $\gamma_{ZU.{\bf C}}$& 2.666 \\ 
			&& Accuracy & 0.677 \\
			&& $p(Z=1)$ & 0.170 \\
			&& Precision & 0.537 \\
			&& Recall & 0.272 \\
			\midrule
			No & $Z, W$ & Raw Agreement Rate; $p(W=Z)$ & 0.766 \\ 
			\bottomrule
		\end{tabular}}
	\caption{Key metrics for the diagnosis atrial fibrillation when inferring proxies from ECG and Echo clinicians' notes with two zero-shot classifiers.}
	\label{tab:semi_synthetic_A-Fib_ECG_Echo_proximal_P2M}
\end{table*}

\begin{table*}[h!]
	\centering
	\resizebox{0.9\linewidth}{!}{
		\begin{tabular}{l l l r}
			\toprule
			Oracle? & Proxies & Metric & $U$=A-Fib \\ [0.5ex]
			\toprule
			Yes & -- & 1 - $p(U=1)$ & 0.738 \\ 
			\midrule\
			\multirow{5}{*}{Yes} &
			\multirow{5}{*}{$W$ from Flan on ${\bf T}^{\text{pre}}_1$ (ECG)} & $\gamma_{WU.{\bf C}}$& 23.825 \\ 
			&& Accuracy & 0.839 \\
			&& $p(W=1)$ & 0.143 \\
			&& Precision & 0.853 \\
			&& Recall & 0.466 \\
			\midrule
			\multirow{5}{*}{Yes} &
			\multirow{5}{*}{$Z$ from Flan on ${\bf T}^{\text{pre}}_1$ (Nursing)} & $\gamma_{ZU.{\bf C}}$& 6.751 \\ 
			&& Accuracy & 0.794 \\
			&& $p(Z=1)$ & 0.240 \\
			&& Precision & 0.615 \\
			&& Recall & 0.563 \\
			\midrule
			No & $Z, W$ & Raw Agreement Rate; $p(W=Z)$ & 0.812 \\ 
			\bottomrule
		\end{tabular}}
	\caption{Key metrics for the diagnosis atrial fibrillation when inferring proxies from ECG and Nursing clinicians' notes with one zero-shot classifier.}
	\label{tab:semi_synthetic_A-Fib_ECG_Nursing_proximal_P1M}
\end{table*}

\begin{table*}[h!]
	\centering
	\resizebox{0.9\linewidth}{!}{
		\begin{tabular}{l l l r}
			\toprule
			Oracle? & Proxies & Metric & $U$=A-Fib \\ [0.5ex]
			\toprule
			Yes & -- & 1 - $p(U=1)$ & 0.738 \\ 
			\midrule\
			\multirow{5}{*}{Yes} &
			\multirow{5}{*}{$W$ from Flan on ${\bf T}^{\text{pre}}_1$ (ECG)} & $\gamma_{WU.{\bf C}}$& 23.825 \\ 
			&& Accuracy & 0.839 \\
			&& $p(W=1)$ & 0.143 \\
			&& Precision & 0.853 \\
			&& Recall & 0.466 \\
			\midrule
			\multirow{5}{*}{Yes} &
			\multirow{5}{*}{$Z$ from OLMo on ${\bf T}^{\text{pre}}_1$ (Nursing)} & $\gamma_{ZU.{\bf C}}$& 2.400 \\ 
			&& Accuracy & 0.689 \\
			&& $p(Z=1)$ & 0.288 \\
			&& Precision & 0.415 \\
			&& Recall & 0.456 \\
			\midrule
			No & $Z, W$ & Raw Agreement Rate; $p(W=Z)$ & 0.716 \\ 
			\bottomrule
		\end{tabular}}
	\caption{Key metrics for the diagnosis atrial fibrillation when inferring proxies from ECG and Nursing clinicians' notes with two zero-shot classifiers.}
	\label{tab:semi_synthetic_A-Fib_ECG_Nursing_proximal_P2M}
\end{table*}

\begin{table*}[h!]
	\centering
	\resizebox{0.9\linewidth}{!}{
		\begin{tabular}{l l l r}
			\toprule
			Oracle? & Proxies & Metric & $U$=A-Fib \\ [0.5ex]
			\toprule
			Yes & -- & 1 - $p(U=1)$ & 0.685 \\ 
			\midrule\
			\multirow{5}{*}{Yes} &
			\multirow{5}{*}{$W$ from Flan on ${\bf T}^{\text{pre}}_1$ (Echo)} & $\gamma_{WU.{\bf C}}$& 15.959 \\ 
			&& Accuracy & 0.731 \\
			&& $p(W=1)$ & 0.060 \\
			&& Precision & 0.885 \\
			&& Recall & 0.169 \\
			\midrule
			\multirow{5}{*}{Yes} &
			\multirow{5}{*}{$Z$ from Flan on ${\bf T}^{\text{pre}}_1$ (Nursing)} & $\gamma_{ZU.{\bf C}}$& 6.044 \\ 
			&& Accuracy & 0.760 \\
			&& $p(Z=1)$ & 0.283 \\
			&& Precision & 0.632 \\
			&& Recall & 0.569 \\
			\midrule
			No & $Z, W$ & Raw Agreement Rate; $p(W=Z)$ & 0.746 \\ 
			\bottomrule
		\end{tabular}}
	\caption{Key metrics for the diagnosis atrial fibrillation when inferring proxies from Echo and Nursing clinicians' notes with one zero-shot classifier.}
	\label{tab:semi_synthetic_A-Fib_Echo_Nursing_proximal_P1M}
\end{table*}

\begin{table*}[h!]
	\centering
	\resizebox{0.9\linewidth}{!}{
		\begin{tabular}{l l l r}
			\toprule
			Oracle? & Proxies & Metric & $U$=A-Fib \\ [0.5ex]
			\toprule
			Yes & -- & 1 - $p(U=1)$ & 0.685 \\ 
			\midrule\
			\multirow{5}{*}{Yes} &
			\multirow{5}{*}{$W$ from Flan on ${\bf T}^{\text{pre}}_1$ (Echo)} & $\gamma_{WU.{\bf C}}$& 15.959 \\ 
			&& Accuracy & 0.731 \\
			&& $p(W=1)$ & 0.060 \\
			&& Precision & 0.885 \\
			&& Recall & 0.169 \\
			\midrule
			\multirow{5}{*}{Yes} &
			\multirow{5}{*}{$Z$ from OLMo on ${\bf T}^{\text{pre}}_1$ (Nursing)} & $\gamma_{ZU.{\bf C}}$& 2.237 \\ 
			&& Accuracy & 0.655 \\
			&& $p(Z=1)$ & 0.318 \\
			&& Precision & 0.454 \\
			&& Recall & 0.458 \\
			\midrule
			No & $Z, W$ & Raw Agreement Rate; $p(W=Z)$ & 0.683 \\ 
			\bottomrule
		\end{tabular}}
	\caption{Key metrics for the diagnosis atrial fibrillation when inferring proxies from Echo and Nursing clinicians' notes with two zero-shot classifiers.}
	\label{tab:semi_synthetic_A-Fib_Echo_Nursing_proximal_P2M}
\end{table*}

\begin{table*}[h!]
	\centering
	\resizebox{0.9\linewidth}{!}{
		\begin{tabular}{l l l r}
			\toprule
			Oracle? & Proxies & Metric & $U$=Heart \\ [0.5ex]
			\toprule
			Yes & -- & 1 - $p(U=1)$ & 0.651 \\ 
			\midrule\
			\multirow{5}{*}{Yes} &
			\multirow{5}{*}{$W$ from Flan on ${\bf T}^{\text{pre}}_1$ (Echo)} & $\gamma_{WU.{\bf C}}$& 3.863 \\ 
			&& Accuracy & 0.704 \\
			&& $p(W=1)$ & 0.297 \\
			&& Precision & 0.591 \\
			&& Recall & 0.502 \\
			\midrule
			\multirow{5}{*}{Yes} &
			\multirow{5}{*}{$Z$ from Flan on ${\bf T}^{\text{pre}}_1$ (Nursing)} & $\gamma_{ZU.{\bf C}}$& 2.450 \\ 
			&& Accuracy & 0.643 \\
			&& $p(Z=1)$ & 0.407 \\
			&& Precision & 0.490 \\
			&& Recall & 0.572 \\
			\midrule
			No & $Z, W$ & Raw Agreement Rate; $p(W=Z)$ & 0.648 \\ 
			\bottomrule
		\end{tabular}}
	\caption{{\bf This setting was included in the main text of the paper.} Key metrics for the diagnosis congestive heart failure when inferring proxies from Echo and Nursing clinicians' notes with one zero-shot classifier.}
	\label{tab:semi_synthetic_Heart_Echo_Nursing_proximal_P1M}
\end{table*}

\begin{table*}[h!]
	\centering
	\resizebox{0.9\linewidth}{!}{
		\begin{tabular}{l l l r}
			\toprule
			Oracle? & Proxies & Metric & $U$=Heart \\ [0.5ex]
			\toprule
			Yes & -- & 1 - $p(U=1)$ & 0.651 \\ 
			\midrule\
			\multirow{5}{*}{Yes} &
			\multirow{5}{*}{$W$ from Flan on ${\bf T}^{\text{pre}}_1$ (Echo)} & $\gamma_{WU.{\bf C}}$& 3.863 \\ 
			&& Accuracy & 0.704 \\
			&& $p(W=1)$ & 0.297 \\
			&& Precision & 0.591 \\
			&& Recall & 0.502 \\
			\midrule
			\multirow{5}{*}{Yes} &
			\multirow{5}{*}{$Z$ from OLMo on ${\bf T}^{\text{pre}}_1$ (Nursing)} & $\gamma_{ZU.{\bf C}}$& 1.416 \\ 
			&& Accuracy & 0.588 \\
			&& $p(Z=1)$ & 0.357 \\
			&& Precision & 0.412 \\
			&& Recall & 0.421 \\
			\midrule
			No & $Z, W$ & Raw Agreement Rate; $p(W=Z)$ & 0.590 \\ 
			\bottomrule
		\end{tabular}}
	\caption{{\bf This setting was included in the main text of the paper.} Key metrics for the diagnosis congestive heart failure when inferring proxies from Echo and Nursing clinicians' notes with two zero-shot classifiers.}
	\label{tab:semi_synthetic_Heart_Echo_Nursing_proximal_P2M}
\end{table*}

\begin{table*}[h!]
	\centering
	\resizebox{0.9\linewidth}{!}{
		\begin{tabular}{l l l r}
			\toprule
			Oracle? & Proxies & Metric & $U$=A-Sis \\ [0.5ex]
			\toprule
			Yes & -- & 1 - $p(U=1)$ & 0.627 \\ 
			\midrule\
			\multirow{5}{*}{Yes} &
			\multirow{5}{*}{$W$ from Flan on ${\bf T}^{\text{pre}}_1$ (Echo)} & $\gamma_{WU.{\bf C}}$& 6.232 \\ 
			&& Accuracy & 0.704 \\
			&& $p(W=1)$ & 0.156 \\
			&& Precision & 0.746 \\
			&& Recall & 0.313 \\
			\midrule
			\multirow{5}{*}{Yes} &
			\multirow{5}{*}{$Z$ from Flan on ${\bf T}^{\text{pre}}_1$ (Nursing)} & $\gamma_{ZU.{\bf C}}$& 10.546 \\ 
			&& Accuracy & 0.785 \\
			&& $p(Z=1)$ & 0.331 \\
			&& Precision & 0.739 \\
			&& Recall & 0.656 \\
			\midrule
			No & $Z, W$ & Raw Agreement Rate; $p(W=Z)$ & 0.721 \\ 
			\bottomrule
		\end{tabular}}
	\caption{Key metrics for the diagnosis coronary atherosclerosis when inferring proxies from Echo and Nursing clinicians' notes with one zero-shot classifier.}
	\label{tab:semi_synthetic_A-Sis_Echo_Nursing_proximal_P1M}
\end{table*}

\begin{table*}[h!]
	\centering
	\resizebox{0.9\linewidth}{!}{
		\begin{tabular}{l l l r}
			\toprule
			Oracle? & Proxies & Metric & $U$=A-Sis \\ [0.5ex]
			\toprule
			Yes & -- & 1 - $p(U=1)$ & 0.627 \\ 
			\midrule\
			\multirow{5}{*}{Yes} &
			\multirow{5}{*}{$W$ from Flan on ${\bf T}^{\text{pre}}_1$ (Echo)} & $\gamma_{WU.{\bf C}}$& 6.232 \\ 
			&& Accuracy & 0.704 \\
			&& $p(W=1)$ & 0.156 \\
			&& Precision & 0.746 \\
			&& Recall & 0.313 \\
			\midrule
			\multirow{5}{*}{Yes} &
			\multirow{5}{*}{$Z$ from OLMo on ${\bf T}^{\text{pre}}_1$ (Nursing)} & $\gamma_{ZU.{\bf C}}$& 4.134 \\ 
			&& Accuracy & 0.680 \\
			&& $p(Z=1)$ & 0.135 \\
			&& Precision & 0.693 \\
			&& Recall & 0.251 \\
			\midrule
			No & $Z, W$ & Raw Agreement Rate; $p(W=Z)$ & 0.787 \\ 
			\bottomrule
		\end{tabular}}
	\caption{Key metrics for the diagnosis coronary atherosclerosis when inferring proxies from Echo and Nursing clinicians' notes with two zero-shot classifiers.}
	\label{tab:semi_synthetic_A-Sis_Echo_Nursing_proximal_P2M}
\end{table*}

\begin{table*}[h!]
	\centering
	\resizebox{0.9\linewidth}{!}{
		\begin{tabular}{l l l r}
			\toprule
			Oracle? & Proxies & Metric & $U$=A-Sis \\ [0.5ex]
			\toprule
			Yes & -- & 1 - $p(U=1)$ & 0.643 \\ 
			\midrule\
			\multirow{5}{*}{Yes} &
			\multirow{5}{*}{$W$ from Flan on ${\bf T}^{\text{pre}}_1$ (Echo)} & $\gamma_{WU.{\bf C}}$& 6.240 \\ 
			&& Accuracy & 0.708 \\
			&& $p(W=1)$ & 0.133 \\
			&& Precision & 0.745 \\
			&& Recall & 0.277 \\
			\midrule
			\multirow{5}{*}{Yes} &
			\multirow{5}{*}{$Z$ from Flan on ${\bf T}^{\text{pre}}_1$ (Radiology)} & $\gamma_{ZU.{\bf C}}$& 6.344 \\ 
			&& Accuracy & 0.727 \\
			&& $p(Z=1)$ & 0.180 \\
			&& Precision & 0.733 \\
			&& Recall & 0.370 \\
			\midrule
			No & $Z, W$ & Raw Agreement Rate; $p(W=Z)$ & 0.784 \\ 
			\bottomrule
		\end{tabular}}
	\caption{{\bf This setting was included in the main text of the paper.} Key metrics for the diagnosis coronary atherosclerosis when inferring proxies from Echo and Radiology clinicians' notes with one zero-shot classifier.}
	\label{tab:semi_synthetic_A-Sis_Echo_Radiology_proximal_P1M}
\end{table*}

\begin{table*}[h!]
	\centering
	\resizebox{0.9\linewidth}{!}{
		\begin{tabular}{l l l r}
			\toprule
			Oracle? & Proxies & Metric & $U$=A-Sis \\ [0.5ex]
			\toprule
			Yes & -- & 1 - $p(U=1)$ & 0.643 \\ 
			\midrule\
			\multirow{5}{*}{Yes} &
			\multirow{5}{*}{$W$ from Flan on ${\bf T}^{\text{pre}}_1$ (Echo)} & $\gamma_{WU.{\bf C}}$& 6.240 \\ 
			&& Accuracy & 0.708 \\
			&& $p(W=1)$ & 0.133 \\
			&& Precision & 0.745 \\
			&& Recall & 0.277 \\
			\midrule
			\multirow{5}{*}{Yes} &
			\multirow{5}{*}{$Z$ from OLMo on ${\bf T}^{\text{pre}}_1$ (Radiology)} & $\gamma_{ZU.{\bf C}}$& 5.046 \\ 
			&& Accuracy & 0.678 \\
			&& $p(Z=1)$ & 0.069 \\
			&& Precision & 0.753 \\
			&& Recall & 0.146 \\
			\midrule
			No & $Z, W$ & Raw Agreement Rate; $p(W=Z)$ & 0.830 \\ 
			\bottomrule
		\end{tabular}}
	\caption{{\bf This setting was included in the main text of the paper.} Key metrics for the diagnosis coronary atherosclerosis when inferring proxies from Echo and Radiology clinicians' notes with two zero-shot classifiers.}
	\label{tab:semi_synthetic_A-Sis_Echo_Radiology_proximal_P2M}
\end{table*}

\begin{table*}[h!]
	\centering
	\resizebox{0.9\linewidth}{!}{
		\begin{tabular}{l l l r}
			\toprule
			Oracle? & Proxies & Metric & $U$=A-Sis \\ [0.5ex]
			\toprule
			Yes & -- & 1 - $p(U=1)$ & 0.735 \\ 
			\midrule\
			\multirow{5}{*}{Yes} &
			\multirow{5}{*}{$W$ from Flan on ${\bf T}^{\text{pre}}_1$ (Radiology)} & $\gamma_{WU.{\bf C}}$& 7.265 \\ 
			&& Accuracy & 0.784 \\
			&& $p(W=1)$ & 0.129 \\
			&& Precision & 0.690 \\
			&& Recall & 0.335 \\
			\midrule
			\multirow{5}{*}{Yes} &
			\multirow{5}{*}{$Z$ from Flan on ${\bf T}^{\text{pre}}_1$ (Nursing)} & $\gamma_{ZU.{\bf C}}$& 12.800 \\ 
			&& Accuracy & 0.833 \\
			&& $p(Z=1)$ & 0.224 \\
			&& Precision & 0.719 \\
			&& Recall & 0.606 \\
			\midrule
			No & $Z, W$ & Raw Agreement Rate; $p(W=Z)$ & 0.793 \\ 
			\bottomrule
		\end{tabular}}
	\caption{{\bf This setting was included in the main text of the paper.} Key metrics for the diagnosis coronary atherosclerosis when inferring proxies from Radiology and Nursing clinicians' notes with one zero-shot classifier.}
	\label{tab:semi_synthetic_A-Sis_Radiology_Nursing_proximal_P1M}
\end{table*}

\begin{table*}[h!]
	\centering
	\resizebox{0.9\linewidth}{!}{
		\begin{tabular}{l l l r}
			\toprule
			Oracle? & Proxies & Metric & $U$=A-Sis \\ [0.5ex]
			\toprule
			Yes & -- & 1 - $p(U=1)$ & 0.735 \\ 
			\midrule\
			\multirow{5}{*}{Yes} &
			\multirow{5}{*}{$W$ from Flan on ${\bf T}^{\text{pre}}_1$ (Radiology)} & $\gamma_{WU.{\bf C}}$& 7.265 \\ 
			&& Accuracy & 0.784 \\
			&& $p(W=1)$ & 0.129 \\
			&& Precision & 0.690 \\
			&& Recall & 0.335 \\
			\midrule
			\multirow{5}{*}{Yes} &
			\multirow{5}{*}{$Z$ from OLMo on ${\bf T}^{\text{pre}}_1$ (Nursing)} & $\gamma_{ZU.{\bf C}}$& 4.081 \\ 
			&& Accuracy & 0.754 \\
			&& $p(Z=1)$ & 0.104 \\
			&& Precision & 0.590 \\
			&& Recall & 0.231 \\
			\midrule
			No & $Z, W$ & Raw Agreement Rate; $p(W=Z)$ & 0.821 \\ 
			\bottomrule
		\end{tabular}}
	\caption{{\bf This setting was included in the main text of the paper.} Key metrics for the diagnosis coronary atherosclerosis when inferring proxies from Radiology and Nursing clinicians' notes with two zero-shot classifiers.}
	\label{tab:semi_synthetic_A-Sis_Radiology_Nursing_proximal_P2M}
\end{table*}

\begin{table*}[h!]
	\centering
	\resizebox{0.9\linewidth}{!}{
		\begin{tabular}{l l l r}
			\toprule
			Oracle? & Proxies & Metric & $U$=Hypertension \\ [0.5ex]
			\toprule
			Yes & -- & 1 - $p(U=1)$ & 0.569 \\ 
			\midrule\
			\multirow{5}{*}{Yes} &
			\multirow{5}{*}{$W$ from Flan on ${\bf T}^{\text{pre}}_1$ (Echo)} & $\gamma_{WU.{\bf C}}$& 1.057 \\ 
			&& Accuracy & 0.488 \\
			&& $p(W=1)$ & 0.648 \\
			&& Precision & 0.438 \\
			&& Recall & 0.657 \\
			\midrule
			\multirow{5}{*}{Yes} &
			\multirow{5}{*}{$Z$ from Flan on ${\bf T}^{\text{pre}}_1$ (Nursing)} & $\gamma_{ZU.{\bf C}}$& 1.414 \\ 
			&& Accuracy & 0.532 \\
			&& $p(Z=1)$ & 0.619 \\
			&& Precision & 0.470 \\
			&& Recall & 0.675 \\
			\midrule
			No & $Z, W$ & Raw Agreement Rate; $p(W=Z)$ & 0.546 \\ 
			\bottomrule
		\end{tabular}}
	\caption{Key metrics for the diagnosis hypertension when inferring proxies from Echo and Nursing clinicians' notes with one zero-shot classifier.}
	\label{tab:semi_synthetic_Hypertension_Echo_Nursing_proximal_P1M}
\end{table*}

\begin{table*}[h!]
	\centering
	\resizebox{0.9\linewidth}{!}{
		\begin{tabular}{l l l r}
			\toprule
			Oracle? & Proxies & Metric & $U$=Hypertension \\ [0.5ex]
			\toprule
			Yes & -- & 1 - $p(U=1)$ & 0.569 \\ 
			\midrule\
			\multirow{5}{*}{Yes} &
			\multirow{5}{*}{$W$ from Flan on ${\bf T}^{\text{pre}}_1$ (Echo)} & $\gamma_{WU.{\bf C}}$& 1.057 \\ 
			&& Accuracy & 0.488 \\
			&& $p(W=1)$ & 0.648 \\
			&& Precision & 0.438 \\
			&& Recall & 0.657 \\
			\midrule
			\multirow{5}{*}{Yes} &
			\multirow{5}{*}{$Z$ from OLMo on ${\bf T}^{\text{pre}}_1$ (Nursing)} & $\gamma_{ZU.{\bf C}}$& 1.053 \\ 
			&& Accuracy & 0.524 \\
			&& $p(Z=1)$ & 0.432 \\
			&& Precision & 0.448 \\
			&& Recall & 0.449 \\
			\midrule
			No & $Z, W$ & Raw Agreement Rate; $p(W=Z)$ & 0.494 \\ 
			\bottomrule
		\end{tabular}}
	\caption{Key metrics for the diagnosis hypertension when inferring proxies from Echo and Nursing clinicians' notes with two zero-shot classifiers. \label{tab:metrics-tension-echo-nursing-p2m}}
	\label{tab:semi_synthetic_Hypertension_Echo_Nursing_proximal_P2M}
\end{table*}

\clearpage
\newpage




\section{Compute Resources}
\label{app:compute_resources}

To run the experiments in this paper, we used a local server with 64 cores of CPUs and 4 x NVIDIA RTX A6000 48GB GPUs. Our semi-synthetic pipeline ran for approximately 36 hours where Flan took roughly 12 hours and OLMo took roughly 24 hours to compute inferences from the text data.


\clearpage
\newpage
\section*{NeurIPS Paper Checklist}

\begin{enumerate}

\item {\bf Claims}
    \item[] Question: Do the main claims made in the abstract and introduction accurately reflect the paper's contributions and scope?
    \item[] Answer: \answerYes{} 
    \item[] Justification: We clearly state the claims and contributions made in the abstract and Section~\ref{sec:introduction}.
    \item[] Guidelines:
    \begin{itemize}
        \item The answer NA means that the abstract and introduction do not include the claims made in the paper.
        \item The abstract and/or introduction should clearly state the claims made, including the contributions made in the paper and important assumptions and limitations. A No or NA answer to this question will not be perceived well by the reviewers. 
        \item The claims made should match theoretical and experimental results, and reflect how much the results can be expected to generalize to other settings. 
        \item It is fine to include aspirational goals as motivation as long as it is clear that these goals are not attained by the paper. 
    \end{itemize}

\item {\bf Limitations}
    \item[] Question: Does the paper discuss the limitations of the work performed by the authors?
    \item[] Answer: \answerYes{} 
    \item[] Justification: We discuss the  limitations of our work in Section~\ref{sec:conclusion}.
    \item[] Guidelines:
    \begin{itemize}
        \item The answer NA means that the paper has no limitation while the answer No means that the paper has limitations, but those are not discussed in the paper. 
        \item The authors are encouraged to create a separate "Limitations" section in their paper.
        \item The paper should point out any strong assumptions and how robust the results are to violations of these assumptions (e.g., independence assumptions, noiseless settings, model well-specification, asymptotic approximations only holding locally). The authors should reflect on how these assumptions might be violated in practice and what the implications would be.
        \item The authors should reflect on the scope of the claims made, e.g., if the approach was only tested on a few datasets or with a few runs. In general, empirical results often depend on implicit assumptions, which should be articulated.
        \item The authors should reflect on the factors that influence the performance of the approach. For example, a facial recognition algorithm may perform poorly when image resolution is low or images are taken in low lighting. Or a speech-to-text system might not be used reliably to provide closed captions for online lectures because it fails to handle technical jargon.
        \item The authors should discuss the computational efficiency of the proposed algorithms and how they scale with dataset size.
        \item If applicable, the authors should discuss possible limitations of their approach to address problems of privacy and fairness.
        \item While the authors might fear that complete honesty about limitations might be used by reviewers as grounds for rejection, a worse outcome might be that reviewers discover limitations that aren't acknowledged in the paper. The authors should use their best judgment and recognize that individual actions in favor of transparency play an important role in developing norms that preserve the integrity of the community. Reviewers will be specifically instructed to not penalize honesty concerning limitations.
    \end{itemize}

\item {\bf Theory Assumptions and Proofs}
    \item[] Question: For each theoretical result, does the paper provide the full set of assumptions and a complete (and correct) proof?
    \item[] Answer: \answerYes{} 
    \item[] Justification: We number all propositions and provide proofs in the main body of the paper. More specifically, our assumptions are described in Section~\ref{sec:motivating_example}, and propositions and proofs are described in Sections~\ref{sec:gotchas} and~\ref{sec:falsification} of the paper.
    \item[] Guidelines:
    \begin{itemize}
        \item The answer NA means that the paper does not include theoretical results. 
        \item All the theorems, formulas, and proofs in the paper should be numbered and cross-referenced.
        \item All assumptions should be clearly stated or referenced in the statement of any theorems.
        \item The proofs can either appear in the main paper or the supplemental material, but if they appear in the supplemental material, the authors are encouraged to provide a short proof sketch to provide intuition. 
        \item Inversely, any informal proof provided in the core of the paper should be complemented by formal proofs provided in appendix or supplemental material.
        \item Theorems and Lemmas that the proof relies upon should be properly referenced. 
    \end{itemize}

    \item {\bf Experimental Result Reproducibility}
    \item[] Question: Does the paper fully disclose all the information needed to reproduce the main experimental results of the paper to the extent that it affects the main claims and/or conclusions of the paper (regardless of whether the code and data are provided or not)?
    \item[] Answer: \answerYes{} 
    \item[] Justification: We disclose all information needed to reproduce the fully synthetic and semi-synthetic experiments in Sections~\ref{app:fully_synthetic_dgp}, \ref{app:mimic_preprocessing}, \ref{app:oracle_text_classifiers}, \ref{app:estimation_details}, and \ref{app:mimic_semi_synthetic_dgp} of the Appendix. We also provide a link to accompanying code in Section~\ref{sec:introduction}.
    \item[] Guidelines:
    \begin{itemize}
        \item The answer NA means that the paper does not include experiments.
        \item If the paper includes experiments, a No answer to this question will not be perceived well by the reviewers: Making the paper reproducible is important, regardless of whether the code and data are provided or not.
        \item If the contribution is a dataset and/or model, the authors should describe the steps taken to make their results reproducible or verifiable. 
        \item Depending on the contribution, reproducibility can be accomplished in various ways. For example, if the contribution is a novel architecture, describing the architecture fully might suffice, or if the contribution is a specific model and empirical evaluation, it may be necessary to either make it possible for others to replicate the model with the same dataset, or provide access to the model. In general. releasing code and data is often one good way to accomplish this, but reproducibility can also be provided via detailed instructions for how to replicate the results, access to a hosted model (e.g., in the case of a large language model), releasing of a model checkpoint, or other means that are appropriate to the research performed.
        \item While NeurIPS does not require releasing code, the conference does require all submissions to provide some reasonable avenue for reproducibility, which may depend on the nature of the contribution. For example
        \begin{enumerate}
            \item If the contribution is primarily a new algorithm, the paper should make it clear how to reproduce that algorithm.
            \item If the contribution is primarily a new model architecture, the paper should describe the architecture clearly and fully.
            \item If the contribution is a new model (e.g., a large language model), then there should either be a way to access this model for reproducing the results or a way to reproduce the model (e.g., with an open-source dataset or instructions for how to construct the dataset).
            \item We recognize that reproducibility may be tricky in some cases, in which case authors are welcome to describe the particular way they provide for reproducibility. In the case of closed-source models, it may be that access to the model is limited in some way (e.g., to registered users), but it should be possible for other researchers to have some path to reproducing or verifying the results.
        \end{enumerate}
    \end{itemize}

\item {\bf Open access to data and code}
    \item[] Question: Does the paper provide open access to the data and code, with sufficient instructions to faithfully reproduce the main experimental results, as described in supplemental material?
    \item[] Answer: \answerYes{} 
    \item[] Justification: We provide a link to all accompanying code for pre-processing the MIMIC-III dataset and running our fully synthetic and semi-synthetic experiments in Section~\ref{sec:introduction}. To reproduce our semi-synthetic results, one must also have access to the MIMIC-III dataset. Once this dataset is acquired, we provide the scripts to process this data. Furthermore, Flan and OLMo are publicly accessible from the Hugging Face library at \url{https://huggingface.co/}.
    \item[] Guidelines:
    \begin{itemize}
        \item The answer NA means that paper does not include experiments requiring code.
        \item Please see the NeurIPS code and data submission guidelines (\url{https://nips.cc/public/guides/CodeSubmissionPolicy}) for more details.
        \item While we encourage the release of code and data, we understand that this might not be possible, so “No” is an acceptable answer. Papers cannot be rejected simply for not including code, unless this is central to the contribution (e.g., for a new open-source benchmark).
        \item The instructions should contain the exact command and environment needed to run to reproduce the results. See the NeurIPS code and data submission guidelines (\url{https://nips.cc/public/guides/CodeSubmissionPolicy}) for more details.
        \item The authors should provide instructions on data access and preparation, including how to access the raw data, preprocessed data, intermediate data, and generated data, etc.
        \item The authors should provide scripts to reproduce all experimental results for the new proposed method and baselines. If only a subset of experiments are reproducible, they should state which ones are omitted from the script and why.
        \item At submission time, to preserve anonymity, the authors should release anonymized versions (if applicable).
        \item Providing as much information as possible in supplemental material (appended to the paper) is recommended, but including URLs to data and code is permitted.
    \end{itemize}

\item {\bf Experimental Setting/Details}
    \item[] Question: Does the paper specify all the training and test details (e.g., data splits, hyperparameters, how they were chosen, type of optimizer, etc.) necessary to understand the results?
    \item[] Answer: \answerYes{} 
    \item[] Justification: Since our experiments use zero-shot classifiers, we only have inference data. We explain which results use the oracle versus non-oracle. We provide precise details on data splits and hyperparameters during odds ratio and average causal effect estimation using the scikit-learn library in Appendix~\ref{app:estimation_details}.
    \item[] Guidelines:
    \begin{itemize}
        \item The answer NA means that the paper does not include experiments.
        \item The experimental setting should be presented in the core of the paper to a level of detail that is necessary to appreciate the results and make sense of them.
        \item The full details can be provided either with the code, in appendix, or as supplemental material.
    \end{itemize}

\item {\bf Experiment Statistical Significance}
    \item[] Question: Does the paper report error bars suitably and correctly defined or other appropriate information about the statistical significance of the experiments?
    \item[] Answer: \answerYes{} 
    \item[] Justification: We provide bootstrap confidence intervals in Figure~\ref{fig:sim_results} and~\ref{fig:additional_sim_results} as well as in Tables~\ref{tab:fully_synthetic_simulation} and~\ref{tab:semi_synthetic_odds_ratio}. 
    \item[] Guidelines:
    \begin{itemize}
        \item The answer NA means that the paper does not include experiments.
        \item The authors should answer "Yes" if the results are accompanied by error bars, confidence intervals, or statistical significance tests, at least for the experiments that support the main claims of the paper.
        \item The factors of variability that the error bars are capturing should be clearly stated (for example, train/test split, initialization, random drawing of some parameter, or overall run with given experimental conditions).
        \item The method for calculating the error bars should be explained (closed form formula, call to a library function, bootstrap, etc.)
        \item The assumptions made should be given (e.g., Normally distributed errors).
        \item It should be clear whether the error bar is the standard deviation or the standard error of the mean.
        \item It is OK to report 1-sigma error bars, but one should state it. The authors should preferably report a 2-sigma error bar than state that they have a 96\% CI, if the hypothesis of Normality of errors is not verified.
        \item For asymmetric distributions, the authors should be careful not to show in tables or figures symmetric error bars that would yield results that are out of range (e.g. negative error rates).
        \item If error bars are reported in tables or plots, The authors should explain in the text how they were calculated and reference the corresponding figures or tables in the text.
    \end{itemize}

\item {\bf Experiments Compute Resources}
    \item[] Question: For each experiment, does the paper provide sufficient information on the computer resources (type of compute workers, memory, time of execution) needed to reproduce the experiments?
    \item[] Answer: \answerYes{} 
    \item[] Justification: We provide details of the compute resources in Section~\ref{app:compute_resources} of the Appendix. 
    \item[] Guidelines:
    \begin{itemize}
        \item The answer NA means that the paper does not include experiments.
        \item The paper should indicate the type of compute workers CPU or GPU, internal cluster, or cloud provider, including relevant memory and storage.
        \item The paper should provide the amount of compute required for each of the individual experimental runs as well as estimate the total compute. 
        \item The paper should disclose whether the full research project required more compute than the experiments reported in the paper (e.g., preliminary or failed experiments that didn't make it into the paper). 
    \end{itemize}
    
\item {\bf Code Of Ethics}
    \item[] Question: Does the research conducted in the paper conform, in every respect, with the NeurIPS Code of Ethics \url{https://neurips.cc/public/EthicsGuidelines}?
    \item[] Answer: \answerYes{} 
    \item[] Justification: We have reviewed the code of ethics. 
    \item[] Guidelines:
    \begin{itemize}
        \item The answer NA means that the authors have not reviewed the NeurIPS Code of Ethics.
        \item If the authors answer No, they should explain the special circumstances that require a deviation from the Code of Ethics.
        \item The authors should make sure to preserve anonymity (e.g., if there is a special consideration due to laws or regulations in their jurisdiction).
    \end{itemize}

\item {\bf Broader Impacts}
    \item[] Question: Does the paper discuss both potential positive societal impacts and negative societal impacts of the work performed?
    \item[] Answer: \answerYes{} 
    \item[] Justification: We discuss broader impacts in Section~\ref{sec:conclusion} of the paper. 
    \item[] Guidelines:
    \begin{itemize}
        \item The answer NA means that there is no societal impact of the work performed.
        \item If the authors answer NA or No, they should explain why their work has no societal impact or why the paper does not address societal impact.
        \item Examples of negative societal impacts include potential malicious or unintended uses (e.g., disinformation, generating fake profiles, surveillance), fairness considerations (e.g., deployment of technologies that could make decisions that unfairly impact specific groups), privacy considerations, and security considerations.
        \item The conference expects that many papers will be foundational research and not tied to particular applications, let alone deployments. However, if there is a direct path to any negative applications, the authors should point it out. For example, it is legitimate to point out that an improvement in the quality of generative models could be used to generate deepfakes for disinformation. On the other hand, it is not needed to point out that a generic algorithm for optimizing neural networks could enable people to train models that generate Deepfakes faster.
        \item The authors should consider possible harms that could arise when the technology is being used as intended and functioning correctly, harms that could arise when the technology is being used as intended but gives incorrect results, and harms following from (intentional or unintentional) misuse of the technology.
        \item If there are negative societal impacts, the authors could also discuss possible mitigation strategies (e.g., gated release of models, providing defenses in addition to attacks, mechanisms for monitoring misuse, mechanisms to monitor how a system learns from feedback over time, improving the efficiency and accessibility of ML).
    \end{itemize}
    
\item {\bf Safeguards}
    \item[] Question: Does the paper describe safeguards that have been put in place for responsible release of data or models that have a high risk for misuse (e.g., pretrained language models, image generators, or scraped datasets)?
    \item[] Answer: \answerNA{} 
    \item[] Justification: Our paper poses no such risks. 
    \item[] Guidelines:
    \begin{itemize}
        \item The answer NA means that the paper poses no such risks.
        \item Released models that have a high risk for misuse or dual-use should be released with necessary safeguards to allow for controlled use of the model, for example by requiring that users adhere to usage guidelines or restrictions to access the model or implementing safety filters. 
        \item Datasets that have been scraped from the Internet could pose safety risks. The authors should describe how they avoided releasing unsafe images.
        \item We recognize that providing effective safeguards is challenging, and many papers do not require this, but we encourage authors to take this into account and make a best faith effort.
    \end{itemize}

\item {\bf Licenses for existing assets}
    \item[] Question: Are the creators or original owners of assets (e.g., code, data, models), used in the paper, properly credited and are the license and terms of use explicitly mentioned and properly respected?
    \item[] Answer: \answerYes{} 
    \item[] Justification: We credit the original creators of the datasets and models we use and mention the licenses.  
    \item[] Guidelines:
    \begin{itemize}
        \item The answer NA means that the paper does not use existing assets.
        \item The authors should cite the original paper that produced the code package or dataset.
        \item The authors should state which version of the asset is used and, if possible, include a URL.
        \item The name of the license (e.g., CC-BY 4.0) should be included for each asset.
        \item For scraped data from a particular source (e.g., website), the copyright and terms of service of that source should be provided.
        \item If assets are released, the license, copyright information, and terms of use in the package should be provided. For popular datasets, \url{paperswithcode.com/datasets} has curated licenses for some datasets. Their licensing guide can help determine the license of a dataset.
        \item For existing datasets that are re-packaged, both the original license and the license of the derived asset (if it has changed) should be provided.
        \item If this information is not available online, the authors are encouraged to reach out to the asset's creators.
    \end{itemize}

\item {\bf New Assets}
    \item[] Question: Are new assets introduced in the paper well documented and is the documentation provided alongside the assets?
    \item[] Answer: \answerYes{} 
    \item[] Justification: The link to the accompanying code contains proper documentation and instructions for reproducing the results and figures shown in this paper.
    \item[] Guidelines:
    \begin{itemize}
        \item The answer NA means that the paper does not release new assets.
        \item Researchers should communicate the details of the dataset/code/model as part of their submissions via structured templates. This includes details about training, license, limitations, etc. 
        \item The paper should discuss whether and how consent was obtained from people whose asset is used.
        \item At submission time, remember to anonymize your assets (if applicable). You can either create an anonymized URL or include an anonymized zip file.
    \end{itemize}

\item {\bf Crowdsourcing and Research with Human Subjects}
    \item[] Question: For crowdsourcing experiments and research with human subjects, does the paper include the full text of instructions given to participants and screenshots, if applicable, as well as details about compensation (if any)? 
    \item[] Answer: \answerNA{} 
    \item[] Justification: This paper does not involve crowdsourcing nor research with human subjects.
    \item[] Guidelines:
    \begin{itemize}
        \item The answer NA means that the paper does not involve crowdsourcing nor research with human subjects.
        \item Including this information in the supplemental material is fine, but if the main contribution of the paper involves human subjects, then as much detail as possible should be included in the main paper. 
        \item According to the NeurIPS Code of Ethics, workers involved in data collection, curation, or other labor should be paid at least the minimum wage in the country of the data collector. 
    \end{itemize}

\item {\bf Institutional Review Board (IRB) Approvals or Equivalent for Research with Human Subjects}
    \item[] Question: Does the paper describe potential risks incurred by study participants, whether such risks were disclosed to the subjects, and whether Institutional Review Board (IRB) approvals (or an equivalent approval/review based on the requirements of your country or institution) were obtained?
    \item[] Answer: \answerNA{} 
    \item[] Justification: This paper does not involve crowdsourcing nor research with human subjects.
    \item[] Guidelines:
    \begin{itemize}
        \item The answer NA means that the paper does not involve crowdsourcing nor research with human subjects.
        \item Depending on the country in which research is conducted, IRB approval (or equivalent) may be required for any human subjects research. If you obtained IRB approval, you should clearly state this in the paper. 
        \item We recognize that the procedures for this may vary significantly between institutions and locations, and we expect authors to adhere to the NeurIPS Code of Ethics and the guidelines for their institution. 
        \item For initial submissions, do not include any information that would break anonymity (if applicable), such as the institution conducting the review.
    \end{itemize}

\end{enumerate}

\end{document}